\DeclareMathOperator*{\argmin}{arg\,min}
\newtheorem{problem}{Problem}
\newtheorem{thmx}{Theorem}
\renewenvironment{proof}{{\bf Proof:}}{\qed}
\newcommand{\patchysan}{{\textsc{Patchy-san}}\xspace}
\newcommand{\nauty}{{\textsc{Nauty}}\xspace}
\icmltitlerunning{Learning Convolutional Neural Networks for Graphs}
\begin{document} 

\twocolumn[
\icmltitle{Learning Convolutional Neural Networks for Graphs}

\icmlauthor{Mathias Niepert}{mathias.niepert@neclab.eu}
\icmlauthor{Mohamed Ahmed}{mohamed.ahmed@neclab.eu}
\icmlauthor{Konstantin Kutzkov}{konstantin.kutzkov@neclab.eu}
\icmladdress{NEC Labs Europe,
            Heidelberg, Germany}

\icmlkeywords{}

\vskip 0.3in
]

\begin{abstract}
Numerous important problems can be framed as learning from graph data. 
We propose a framework for learning convolutional neural networks for arbitrary graphs. These graphs may be undirected, directed, and with both discrete and continuous node and edge attributes. Analogous to image-based convolutional networks that operate on locally connected regions of the input, we present a general approach to extracting locally connected regions from  graphs. Using established benchmark data sets, we demonstrate that the learned feature representations are competitive with state of the art graph kernels and that their computation is highly efficient. 
\end{abstract}

\section{Introduction}

With this paper we aim to bring convolutional neural networks to bear on a large class of graph-based learning problems. We consider the following two problems.
\begin{enumerate}
\item Given a collection of graphs, learn a function that can be used for classification and regression problems on unseen graphs. The nodes of any two graphs are \emph{not} necessarily in correspondence. For instance, each graph of the collection could model a chemical compound and the output could be a function mapping unseen compounds to their level of activity against cancer cells.
\item Given a large graph, learn graph representations that can be used to infer unseen graph properties such as node types and missing edges. 
\end{enumerate}

We propose a framework for learning representations for classes of directed and undirected graphs. The graphs may have nodes and edges with multiple discrete and continuous attributes and may have multiple types of edges. Similar to convolutional neural network for images, we construct locally connected neighborhoods from the input graphs. These neighborhoods are generated efficiently and serve as the receptive fields of a convolutional architecture, allowing the framework to learn effective graph representations. 

\begin{figure}
\centering
\includegraphics[width = 0.46\textwidth]{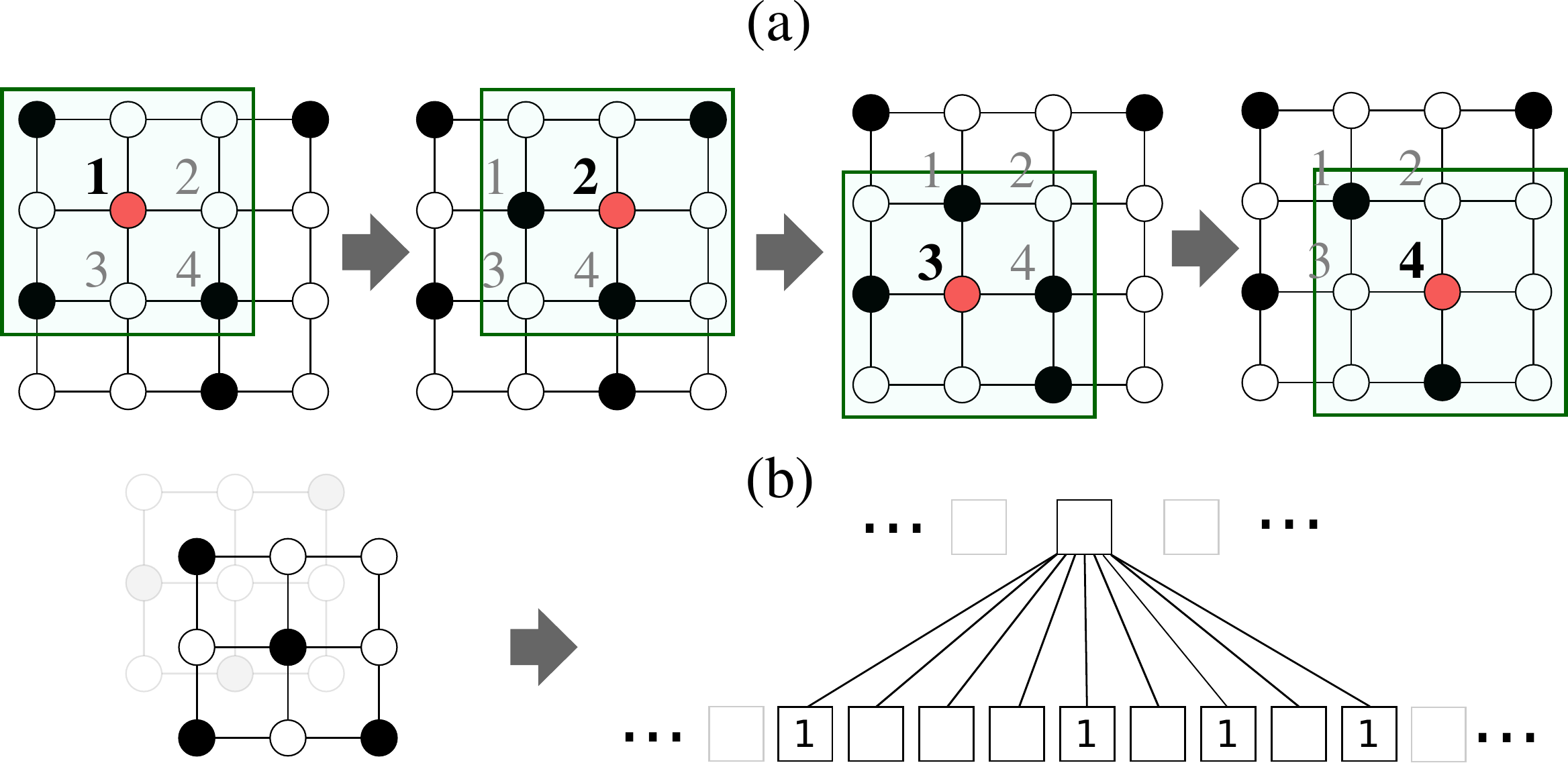}
\caption{\label{fig-grid} A CNN with a receptive field of size $3$x$3$. The field is moved over an image from left to right and top to bottom using a particular stride (here: 1) and zero-padding (here: none) (a). The values read by the receptive fields are transformed into a linear layer and fed to a convolutional architecture (b). The node sequence for which the receptive fields are created and the shapes of the receptive fields are fully determined by the hyper-parameters.}
\end{figure}

The proposed approach builds on concepts from convolutional neural networks (CNNs)~\cite{Kunihiko:1980,atlas:1987,lecun:1998,lecun:2015} for images and extends them to arbitrary graphs. 
Figure~\ref{fig-grid} illustrates the locally connected receptive fields of a CNN for images. An image can be represented as a square grid graph whose nodes represent pixels. Now, a CNN can be seen as traversing a node sequence (nodes $1$-$4$ in Figure~\ref{fig-grid}(a)) and  generating fixed-size neighborhood graphs (the $3$x$3$ grids in Figure~\ref{fig-grid}(b)) for each of the nodes. The neighborhood graphs serve as the receptive fields to read feature values from the pixel nodes. Due to the implicit spatial order of the pixels, the sequence of nodes for which neighborhood graphs are created, from left to right and top to bottom, is uniquely determined. The same holds for NLP problems where each sentence (and its parse-tree) determines a sequence of words. However, for numerous graph collections a problem-specific ordering (spatial, temporal, or otherwise) is missing and the nodes of the  graphs are not in correspondence. In these instances, one has to  solve two problems: (i) Determining the node sequences for which neighborhood graphs are created and (ii) computing a normalization of neighborhood graphs, that is, a unique mapping from a graph representation into a vector space representation. 
The proposed approach, termed \patchysan, addresses these two problems for arbitrary graphs. For each input graph, it first determines nodes (and their order) for which neighborhood graphs are created. For each of these nodes, a neighborhood consisting of exactly $k$ nodes is extracted and normalized, that is, it is uniquely mapped to a space with a fixed linear order. The normalized neighborhood serves as the receptive field for a node under consideration. Finally, feature learning components such as convolutional and dense layers are combined with the normalized neighborhood graphs as the CNN's receptive fields.

Figure~\ref{fig-architecture} illustrates the \patchysan architecture which has several advantages over existing approaches: First, it is highly efficient, naively parallelizable, and applicable to  large graphs. Second, for a number of applications, ranging from computational biology to social network analysis, it is important to visualize learned network motifs~\cite{milo:2002}. \patchysan supports feature visualizations providing insights into the structural properties of graphs. Third, instead of crafting yet another graph kernel, \patchysan learns application dependent features without the need to feature engineering. Our theoretical contributions are the definition of the normalization problem on graphs and its complexity; a method for comparing graph labeling approaches for a collection of graphs; and a result that shows that \patchysan generalizes CNNs on images.  Using standard benchmark data sets, we demonstrate that the learned CNNs for graphs are both efficient and effective compared to state of the art graph kernels. 

\begin{figure}
\centering
\includegraphics[scale = 0.48]{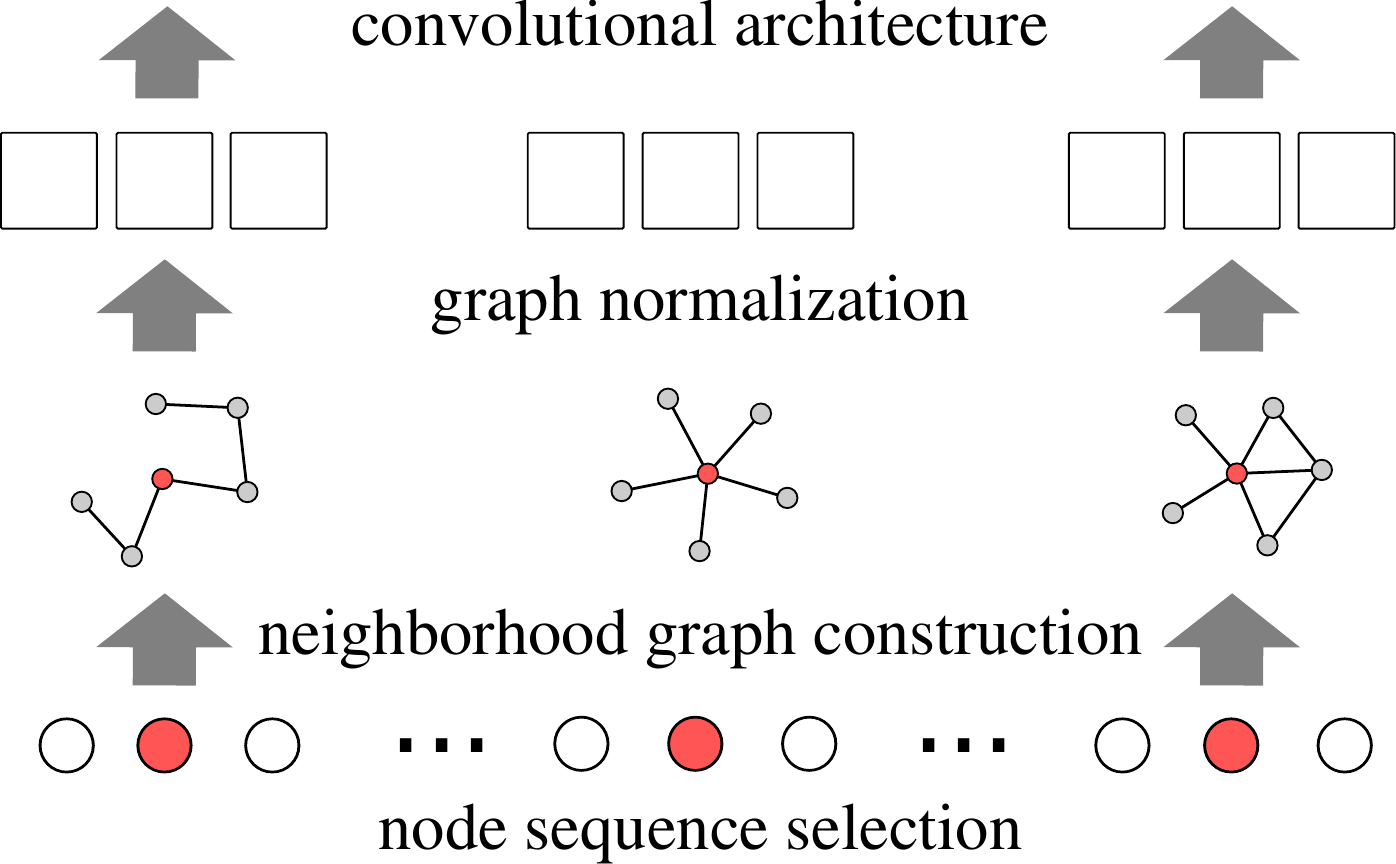}
\caption{\label{fig-architecture} An illustration of the proposed architecture. A node sequence is selected from a graph via a graph labeling procedure. For some nodes in the sequence, a local neighborhood graph is assembled and normalized. The  normalized neighborhoods are used as receptive fields and combined with existing CNN components.}
\end{figure}


\section{Related Work}
\label{sec:related}

Graph kernels allow kernel-based learning approaches such as SVMs to work directly on graphs~\cite{Vishwanathan:2010}. Kernels on graphs were originally defined as similarity functions on the nodes of a single graph~\cite{Kondor:2002}. 
Two representative classes of kernels are the skew spectrum kernel~\cite{Kondor:2008} and kernels based on  graphlets~\cite{Kondor:2009,Shervashidze:2009}. The latter is related to our work, as it builds kernels based on fixed-sized subgraphs. These subgraphs, which are often called motifs or graphlets, reflect functional network properties~\cite{milo:2002,alon:2007}. However, due to the combinatorial complexity of subgraph enumeration, graphlet kernels are restricted to subgraphs with few nodes. 
An effective class of graph kernels are the Weisfeiler-Lehman (WL) kernels~\cite{Shervashidze:2011}. WL kernels, however, only support discrete features and use memory linear in the number of training examples at test time. \patchysan uses WL as one possible labeling procedure to compute receptive fields. 
Deep graph kernels~\cite{Yanardag:2015} and graph invariant kernels~\cite{Orsini:2015} compare graphs based on the existence or count of small substructures such as shortest paths~\cite{Borgwardt:2005}, graphlets, subtrees, and other graph invariants~\cite{haussler:1999,Orsini:2015}.  In contrast, \patchysan learns substructures from graph data and is not limited to a predefined set of motifs. Moreover, while all  graph kernels have a training complexity at least \emph{quadratic} in the number of graphs~\cite{Shervashidze:2011}, which is prohibitive for large-scale problems, \patchysan scales \emph{linearly} with the number of graphs.

Graph neural networks (GNNs)~\cite{Scarselli:2009} are a recurrent neural network architecture defined on graphs. GNNs apply recurrent neural networks for walks on the graph structure, propagating node representations until a fixed point is reached. The resulting node representations are then used as features in classification and regression problems. GNNs support only discrete labels and perform as many backpropagation operations as there are edges and nodes in the graph \emph{per learning iteration}. Gated Graph Sequence Neural Networks modify GNNs to use gated recurrent units and to output sequences~\cite{li:2015}. 

Recent work extended CNNs to topologies that differ from the low-dimensional grid structure~\cite{bruna:2013,henaff:2015}. All of these methods, however, assume one global  graph structure, that is, a correspondence of the vertices across input examples. 
\cite{duvenaud:2015} perform convolutional type operations on graphs, developing a differentiable variant of one specific graph feature.

\section{Background}
We provide a brief introduction to the required background in convolutional networks and graph theory.
\subsection{Convolutional Neural Networks}

CNNs were inspired by earlier work that showed that the visual cortex in animals contains complex arrangements of cells, responsible for detecting light in small local regions of the visual field~\cite{hubel:1968}. CNNs were developed in the $1980$s and have been  applied to image, speech, text, and drug discovery problems~\cite{atlas:1987,LeCun:1989,lecun:1998,lecun:2015,WallachDH:2015}. A predecessor to CNNs was the Neocognitron~\cite{Kunihiko:1980}. 
A typical CNN is composed of convolutional and dense layers. The purpose of the first convolutional layer is the extraction of common patterns found within local regions of the input images. CNNs convolve learned filters over the input image, computing the inner product at every image location
in the image and outputting the result as tensors whose depth is the number of filters.

\subsection{Graphs}

A graph $G$ is a pair $(V, E)$ with $V = \{v_1, ..., v_n\}$ the set of vertices and $E \subseteq V \times V$ the set of edges. Let $n$ be the number of vertices and $m$ the number of edges. 
 Each graph can be represented by an adjacency matrix $\mathbf{A}$ of size $n \times n$, where $\mathbf{A}_{i,j} = 1$ if there is an edge from vertex $v_i$ to vertex 
$v_j$, and $\mathbf{A}_{i,j} = 0$ otherwise. In this case, we say that vertex $v_i$ has \emph{position} $i$ in $\mathbf{A}$. Moreover, if $\mathbf{A}_{i,j} = 1$ we say $v_i$ and $v_j$ are \emph{adjacent}. Node and edge attributes are features that attain one value for each node and edge of a graph. We use the term attribute value instead of label to avoid confusion with the graph-theoretical concept of a labeling. 
A walk is a sequence of nodes in a graph, in which consecutive nodes are connected by an edge. A path is a walk with distinct nodes.
We write $\mathbf{d}(u, v)$ to denote the distance between $u$ and $v$, that is, the length of the shortest path between $u$ and $v$.
$N_1(v)$ is the $1$-neighborhood of a node, that is, all nodes that are adjacent to $v$. 


\textbf{Labeling and Node Partitions.}
\textsc{Patchy-san} utilizes graph labelings to impose an order on nodes. 
A graph labeling $\ell$ is a function $\ell: V \rightarrow S$  from the set of vertices $V$ to an ordered set $S$ such as the real numbers and integers. A graph labeling procedure computes a graph labeling for an input graph. When it is clear from the context, we use \emph{labeling} to refer to both, the graph labeling and the  procedure to compute it.  A ranking (or coloring) is a function $\mathbf{r}:V \rightarrow \{1, ..., |V|\}$. Every labeling induces a ranking $\mathbf{r}$ with $\mathbf{r}(u) < \mathbf{r}(v)$ if and only if $\ell(u)>\ell(v)$. 
If the labeling $\ell$ of graph $G$ is injective, it determines a total order of $G$'s vertices and a unique adjacency matrix $\mathbf{A}^{\ell}(G)$ of $G$ where vertex $v$ has position $\mathbf{r}(v)$ in $\mathbf{A}^{\ell}(G)$. Moreover, every graph labeling induces a partition $\{V_1, ..., V_n\}$ on $V$  with $u, v \in V_i$ if and only if $\ell(u)=\ell(v)$. 

Examples of graph labeling procedures are node degree and other measures of centrality commonly used in the analysis of networks. For instance, the \emph{betweeness centrality} of a vertex $v$ computes the fractions of shortest paths that pass through $v$. 
The Weisfeiler-Lehman algorithm~\cite{weisfeiler:1968,douglas2011weisfeiler} is a procedure for partitioning the vertices of a graph. 
It is also known as color refinement and naive vertex classification. 
Color refinement has attracted considerable interest in the ML community since it can be applied to speed-up inference in graphical models~\cite{Kersting:2009,Kersting:2014} and as a method to compute graph kernels~\cite{Shervashidze:2011}. 
\patchysan applies these labeling procedures, among others (degree, page-rank, eigenvector centrality, etc.), to impose an order on the nodes of graphs, replacing application-dependent orders (temporal, spatial, etc.) where missing.

\textbf{Isomorphism and Canonicalization.}
The computational problem of deciding whether two graphs are isomorphic surfaces in several application domains. 
The graph isomorphism (GI) problem is in NP but not known to be in P or NP-hard. Under several mild restrictions, GI is known to be in P. For instance, GI is in P for graphs of bounded degree~\cite{luks:1982}. A canonicalization of a graph $G$ is a graph $G'$ with a fixed vertex order which is isomorphic to $G$ and which represents its entire isomorphism class. In practice, the graph canonicalization tool \nauty has shown remarkable performance~\cite{McKay:2014}.

\section{Learning CNNs for Arbitrary Graphs}

When CNNs are applied to images, a receptive field (a square grid) is moved over each image with a particular step size. The receptive field reads the pixels' feature values,  for each channel once, and a patch of values is created for each channel. 
Since the pixels of an image have an implicit arrangement -- a spatial order -- the receptive fields are always moved from left to right and top to bottom. Moreover, the spatial order uniquely determines the nodes of each receptive field and the way these nodes are mapped to a vector space representation (see Figure~\ref{fig-grid}(b)). Consequently, the values read from two pixels using two different locations of the receptive field are assigned to the same relative position if and only if the pixels' structural roles (their spatial position within the receptive field) are identical.

To show the connection between CNNs and \patchysan, we frame CNNs on images as identifying a sequence of nodes in the square grid graph representing the image and building a normalized neighborhood graph -- a receptive field -- for each node in the identified sequence.
For graph collections where an application-dependent node order is missing and where the nodes of any two graphs are not yet aligned, we need to determine for each graph (i) the sequences of nodes for which we create neighborhoods, and (ii) a unique mapping from the graph representation to a vector representation such that nodes with similar structural roles in the neighborhood graphs are positioned similarly in the vector representation. 

We address these problems by leveraging graph labeling procedures that assigns nodes from two different graphs to a similar relative position in their respective adjacency matrices if their structural roles within the graphs are similar. Given a collection of graphs, \patchysan (\textsc{Select}-\textsc{Assemble}-\textsc{Normalize}) applies the following steps to each graph: (1) Select a fixed-length sequence of nodes from the graph; (2) assemble a fixed-size neighborhood for each node in the selected sequence; (3) normalize the extracted  neighborhood graph; and (4) learn neighborhood representations with convolutional neural networks from the resulting sequence of patches.  

In the following, we describe methods that address the above-mentioned challenges. 

\begin{algorithm}[t!]
  \small
   \caption{\textsc{SelNodeSeq}: Select Node Sequence}
   \label{alg:sequence}
\begin{algorithmic}[1]
   \STATE {\bfseries input:} graph labeling procedure $\ell$, graph $G=(V, E)$, stride $s$, width $w$, receptive field size $k$
   \STATE $V_{\mathtt{sort}}$ = top $w$ elements of $V$ according to $\ell$
   \STATE $i=1, j=1$
   \WHILE{$j < w$}
   \IF{$i \leq |V_{\mathtt{sort}}|$}
   \STATE $\mathsf{f} = \textsc{ReceptiveField}(V_{\mathtt{sort}}[i])$
   \ELSE
   \STATE $\mathsf{f} = \textsc{ZeroReceptiveField}()$
   \ENDIF
   \STATE apply $\mathsf{f}$ to each input channel
   \STATE $i = i + s$, $j = j + 1$
   \ENDWHILE
\end{algorithmic}
\end{algorithm}
\vspace{-1mm}

\begin{figure*}
\centering
\includegraphics[scale = 0.462]{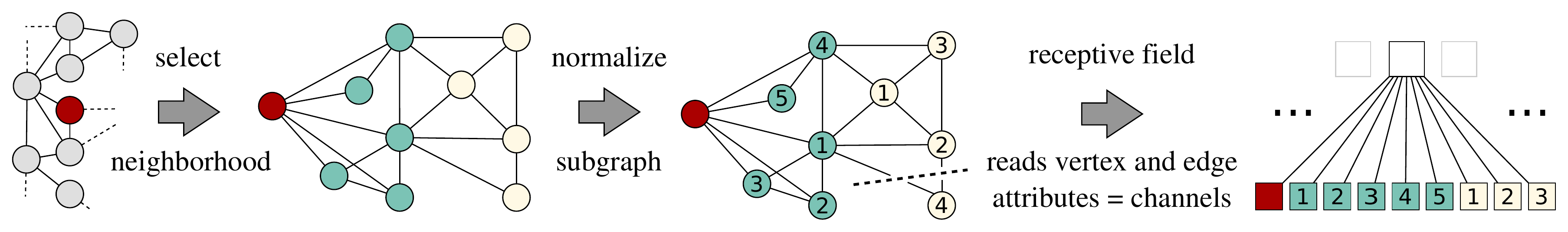}
\vspace{-1.6mm}
\caption{\label{fig-normalize} The normalization is performed for each of the graphs induced on the neighborhood of a root node $v$ (the red node; node colors indicate distance to the root node). A graph labeling is used to rank the nodes and to create the normalized receptive fields, one of size $k$ (here: $k=9$) for node attributes and one of size $k\times k$ for edge attributes. Normalization also includes cropping of excess nodes and padding with dummy nodes. Each vertex (edge) attribute corresponds to an input channel with the respective receptive field.}
\end{figure*}

\subsection{Node Sequence Selection}
Node sequence selection is the process of identifying, for each input graph, a sequence of nodes for which receptive fields are created. Algorithm~\ref{alg:sequence} lists one such procedure. First, the vertices of the input graph are sorted with respect to a given graph labeling. Second, the resulting node sequence is traversed using a given stride $s$ and for each visited node, Algorithm~\ref{alg:example} is executed to construct a receptive field, until exactly $w$ receptive fields have been created. The stride $s$ determines the distance, relative to the selected node sequence, between two consecutive nodes for which a receptive field is created. If the number of nodes is smaller than $w$, the algorithm creates all-zero receptive fields for padding purposes. 

Several alternative methods for vertex sequence selection are possible. For instance, a depth-first traversal of the input graph guided by the values of the graph labeling. We leave these ideas to future work.

\begin{algorithm}[t!]
  \small
   \caption{\label{alg:assembly}\textsc{NeighAssemb}: Neighborhood Assembly}   
\begin{algorithmic}[1]
   \STATE {\bfseries input:} vertex $v$, receptive field size $k$
   \STATE {\bfseries output:} set of neighborhood nodes $N$ for $v$
   \STATE $N  = [v]$
   \STATE $L = [v]$
   \WHILE{$|N| < k$ and $|L|>0$}
   \STATE $L = \bigcup_{v \in L} N_1(v)$
   \STATE $N = N \cup L$
   \ENDWHILE
   \STATE {\bfseries return} the set of vertices $N$
\end{algorithmic}
\end{algorithm}
\vspace{-1mm}

\subsection{Neighborhood Assembly}

For each of the nodes identified in the previous step, a receptive field has to be constructed. Algorithm~\ref{alg:example} first calls Algorithm~\ref{alg:assembly} to assembles a local neighborhood for the input node. The nodes of the neighborhood are the candidates for the receptive field. Algorithm~\ref{alg:assembly} lists the neighborhood assembly steps. Given as inputs a node $v$ and the size of the receptive field $k$, the procedure performs a breadth-first search, exploring vertices with an increasing distance from $v$, and adds these vertices to a set $N$. If the number of collected nodes is smaller than $k$, the $1$-neighborhood of the  vertices most recently added to $N$ are collected, and so on, until at least $k$ vertices are in $N$, or until there are no more neighbors to add. Note that at this time, the size of $N$ is possibly different to $k$.


\subsection{Graph Normalization}

The receptive field for a node is constructed by \emph{normalizing} the neighborhood assembled in the previous step. Illustrated in Figure~\ref{fig-normalize}, the normalization imposes an order on the nodes of the neighborhood graph so as to map from the unordered graph space to a vector space with a linear order. The basic idea is to leverage graph labeling procedures that assigns nodes of two different graphs to a similar relative position in the respective adjacency matrices if and only if their structural roles within the graphs are similar.


To formalize this intuition, we define the optimal graph normalization problem which aims to find a labeling that is optimal relative to a given collection of graphs. 

\begin{problem}[Optimal graph normalization]
Let $\mathcal{G}$ be a collection of unlabeled graphs with $k$ nodes, let $\ell$ be an injective graph labeling procedure, let $\mathbf{d}_{\mathbf{G}}$ be a distance measure on graphs with $k$ nodes, and let $\mathbf{d}_{\mathbf{A}}$ be a distance measure on $k\times k$ matrices. Find $\hat{\ell}$ such that
$$ \label{equation-graph-normalization}
\hat{\ell} = \argmin_{\ell} \mathbb{E}_{\mathcal{G}}\left[\left|\mathbf{d}_{\mathbf{A}}\left(\mathbf{A}^{\ell}(G), \mathbf{A}^{\ell}(G')\right) - \mathbf{d}_{\mathbf{G}}(G, G')\right|\right]. $$
\end{problem}
The problem amounts to finding a graph labeling procedure $\ell$, such that, for any two graphs drawn uniformly at random from $\mathcal{G}$, the expected difference between the distance of the graphs in vector space (with respect to the adjacency matrices based on $\ell$) and the distance of the graphs in graph space is minimized. The optimal graph normalization problem is a generalization of the classical graph canonicalization problem. A canonical labeling algorithm, however, is optimal only for isomorphic graphs and might perform poorly for graphs that are similar but not isomorphic. In contrast, the smaller the expectation of the optimal normalization problem, the better the labeling aligns nodes with similar structural roles. Note that the similarity is determined by $\mathbf{d}_{\mathbf{G}}$.

\begin{algorithm}[t!]
  \small
   \caption{\textsc{ReceptiveField}: Create Receptive Field}
   \label{alg:example}
\begin{algorithmic}[1]
	\STATE {\bfseries input:} vertex $v$, graph labeling $\ell$, receptive field size $k$
   \STATE $N = \textsc{NeighAssemb}(v, k)$
   \STATE $G_{\mathtt{norm}} = \textsc{NormalizeGraph}(N, v, \ell, k)$
   \STATE {\bfseries return} $G_{\mathtt{norm}}$
\end{algorithmic}
\end{algorithm}

We have the following result concerning the complexity of the optimal normalization problem. 
\begin{thmx} \label{thm:normalization_NP}
  Optimal graph normalization is NP-hard.
\end{thmx}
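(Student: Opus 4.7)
The plan is to reduce from the decision version of the Quadratic Assignment Problem for symmetric $0/1$ matrices, which is NP-hard: given such matrices $A$, $B$ of size $k \times k$ and an integer $m$, decide whether some permutation $\pi$ yields $\|A - P_\pi B P_\pi^\top\|_F^2 \leq m$. This problem captures minimum graph edit distance under vertex relabeling (restricted to edge insertions and deletions), and its NP-hardness is classical.

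From such an instance I would construct a normalization instance by letting $G_1$, $G_2$ be the graphs with adjacency matrices $A$, $B$, setting $\mathcal{G} = \{G_1, G_2\}$ under the uniform distribution on ordered pairs, choosing $\mathbf{d}_{\mathbf{A}}(X,Y) = \|X - Y\|_F^2$, and choosing $\mathbf{d}_{\mathbf{G}} \equiv 0$. Both distances are trivially polynomial-time, so the reduction is polynomial in size.

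Because an injective graph labeling procedure is only required to be injective on each input graph and is otherwise unconstrained, optimizing over $\ell$ is equivalent to independently choosing permutations $\pi_1$, $\pi_2$ of the row/column indices of $A$, $B$. Expanding the expectation over the four ordered pairs in $\mathcal{G} \times \mathcal{G}$ and noting that the two diagonal terms contribute zero gives
\[
\mathbb{E}_{\mathcal{G}}\!\left[\left|\mathbf{d}_{\mathbf{A}}(\mathbf{A}^{\ell}(G), \mathbf{A}^{\ell}(G')) - \mathbf{d}_{\mathbf{G}}(G,G')\right|\right] = \tfrac{1}{2}\|P_{\pi_1} A P_{\pi_1}^\top - P_{\pi_2} B P_{\pi_2}^\top\|_F^2.
\]
Conjugation by $P_{\pi_1}$ is a Frobenius isometry, so fixing $\pi_1 = \mathrm{id}$ is without loss of generality, and the minimum of the normalization objective is exactly half the QAP optimum. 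A polynomial-time algorithm for Optimal Graph Normalization would then decide the QAP instance by comparing its output to $m/2$, establishing NP-hardness.

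The hard part will be the modeling rather than the combinatorics. One must justify that a ``graph labeling procedure'' is permissive enough to realize an arbitrary pair of rankings across the two graphs of $\mathcal{G}$, which is immediate if procedures may depend on the input graph in any way beyond the injectivity requirement. One must also confirm that the chosen $\mathbf{d}_{\mathbf{G}}$ satisfies whatever implicit conditions the paper attaches to a ``distance measure''; since only the fact $\mathbf{d}_{\mathbf{G}}(G_i, G_i) = 0$ is actually used, weakening to a pseudo-metric suffices, and if a proper metric is preferred one can replace the constant zero function by, e.g., the absolute difference of edge counts without affecting the argument.
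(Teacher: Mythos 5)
Your reduction is sound, but it is a genuinely different route from the paper's. The paper's entire proof is the one-line statement that the result follows ``by reduction from subgraph isomorphism,'' so its hardness source is the problem of embedding a smaller pattern graph into a host graph (which, given the problem's fixed node count $k$, implicitly requires padding with dummy nodes and a one-sided or otherwise tailored choice of $\mathbf{d}_{\mathbf{A}}$ and $\mathbf{d}_{\mathbf{G}}$ to reward containment rather than symmetric agreement); no details are given. You instead reduce from the decision version of QAP on symmetric $0/1$ matrices, i.e.\ minimum edge-mismatch alignment of two equal-size graphs, whose NP-hardness is indeed classical (e.g.\ via Hamiltonian cycle, taking $B$ to be a cycle, so that hardness holds for thresholds $m>0$ even though the $m=0$ case is just graph isomorphism). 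Your construction is fully worked out where the paper's is not: the two-element collection, the symmetric Frobenius choice of $\mathbf{d}_{\mathbf{A}}$, the observation that the diagonal pairs vanish, and the isometry argument fixing $\pi_1=\mathrm{id}$ are all correct, and the computation of the expectation matches the paper's sampling model of independent uniform ordered pairs (as used in its Theorem~2). The caveats you flag are the right ones and are ambiguities of the paper's problem statement rather than gaps in your argument: the minimization over ``injective graph labeling procedures'' must be permissive enough to order the two (non-isomorphic) graphs independently --- and when $G_1\cong G_2$ both optima are zero, so the identity between the normalization optimum and half the QAP optimum survives even under an isomorphism-invariance requirement --- and $\mathbf{d}_{\mathbf{G}}\equiv 0$ can be replaced by the edge-count difference (which never exceeds the Frobenius term, so the absolute value never flips) if a nondegenerate distance is insisted upon. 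One small point of care: the normalization problem asks for $\hat{\ell}$ rather than the optimal value, so you should say explicitly that the returned labelings are evaluated on $G_1,G_2$ in polynomial time to recover the objective before comparing with $m/2$. What your route buys is a self-contained, symmetric-distance reduction needing no padding; what the paper's route buys is brevity and a source problem (subgraph isomorphism) that more directly mirrors the ``align similar but non-isomorphic structures'' motivation given in the surrounding text.
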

\begin{proof}
By reduction from subgraph isomorphism.
\end{proof}

\patchysan does \emph{not} solve the above optimization problem. Instead, it may compare different graph labeling methods and choose the one that performs best relative to a given collection of graphs.

\begin{algorithm}[t!]
   \small
   \caption{\textsc{NormalizeGraph}: Graph Normalization}
   \label{alg:normalization}
\begin{algorithmic}[1]
   \STATE {\bfseries input:} subset of vertices $U$ from original graph $G$, vertex $v$, graph labeling $\ell$, receptive field size $k$
   \STATE {\bfseries output:}  receptive field for $v$
   \STATE compute ranking $\mathbf{r}$ of $U$ using $\ell$, subject to \\  $\forall u,w \in U: \mathbf{d}(u,v) < \mathbf{d}(w,v) \Rightarrow \mathbf{r}(u)<\mathbf{r}(w)$
   \IF{$|U|> k$}
   \STATE $N = $ top $k$ vertices in $U$ according to $\mathbf{r}$
   \STATE compute ranking $\mathbf{r}$ of $N$ using $\ell$, subject to \\ $\forall u,w \in N: \mathbf{d}(u,v) < \mathbf{d}(w,v) \Rightarrow \mathbf{r}(u)<\mathbf{r}(w)$ 
   \ELSIF{$|V| < k$}
   \STATE $N = U$ and $k-|U|$ dummy nodes
   \ELSE 
   \STATE $N = U$
   \ENDIF
   \STATE construct the subgraph $G[N]$ for the vertices $N$
   \STATE canonicalize $G[N]$, respecting the prior coloring $\mathbf{r}$
   \STATE {\bfseries return}  $G[N]$
\end{algorithmic}
\end{algorithm}

\begin{thmx}~\label{thrm:graph_norm_expectation}
Let $\mathcal{G}$ be a collection of graphs and let $(G_1,G_1'), ..., (G_N,G_N')$ be a sequence of pairs of graphs sampled independently and uniformly at random from $\mathcal{G}$. Let 
$\hat{\theta}_{\ell} := \sum_{i=1}^{N} \mathbf{d}_{\mathbf{A}}\left( \mathbf{A}^{\ell}(G_i), \mathbf{A}^{\ell}(G_i')\right) / N$ and $\theta_{\ell} := \mathbb{E}_{\mathcal{G}}\left[\left| \mathbf{d}_{\mathbf{A}}\left( \mathbf{A}^{\ell}(G), \mathbf{A}^{\ell}(G')\right) - \mathbf{d}_{\mathbf{G}}(G, G')\right|\right]$. If $\mathbf{d}_{\mathbf{A}} \geq \mathbf{d}_{\mathbf{G}}$, then $\mathbb{E}_{\mathcal{G}}[\hat{\theta}_{\ell_1}] < \mathbb{E}_{\mathcal{G}}[\hat{\theta}_{\ell_2}]$ if and only if $\theta_{\ell_1} < \theta_{\ell_2}$.
\end{thmx}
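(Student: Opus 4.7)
The plan is to reduce both quantities to quantities that differ only by a labeling-independent constant. First, I would apply linearity of expectation and the fact that the $N$ pairs $(G_i, G_i')$ are i.i.d.\ uniform draws from $\mathcal{G}$ to obtain
\[
\mathbb{E}_{\mathcal{G}}[\hat{\theta}_{\ell}] \;=\; \mathbb{E}_{\mathcal{G}}\!\left[\mathbf{d}_{\mathbf{A}}\!\left(\mathbf{A}^{\ell}(G), \mathbf{A}^{\ell}(G')\right)\right],
\]
where $G, G'$ are independent uniform draws from $\mathcal{G}$. So the expectation of the empirical statistic is just a population mean of the adjacency-space distance under labeling $\ell$.

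Next I would use the hypothesis $\mathbf{d}_{\mathbf{A}} \geq \mathbf{d}_{\mathbf{G}}$ to remove the absolute value in the definition of $\theta_{\ell}$. Specifically, under this assumption the integrand $\mathbf{d}_{\mathbf{A}}(\mathbf{A}^{\ell}(G),\mathbf{A}^{\ell}(G')) - \mathbf{d}_{\mathbf{G}}(G, G')$ is pointwise non-negative, so the absolute value can be dropped and linearity of expectation gives
\[
\theta_{\ell} \;=\; \mathbb{E}_{\mathcal{G}}\!\left[\mathbf{d}_{\mathbf{A}}\!\left(\mathbf{A}^{\ell}(G), \mathbf{A}^{\ell}(G')\right)\right] \;-\; \mathbb{E}_{\mathcal{G}}\!\left[\mathbf{d}_{\mathbf{G}}(G, G')\right].
\]

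The key observation is that the second term is a constant $C := \mathbb{E}_{\mathcal{G}}[\mathbf{d}_{\mathbf{G}}(G, G')]$ that does \emph{not} depend on the labeling $\ell$, since $\mathbf{d}_{\mathbf{G}}$ is defined on graphs themselves rather than on their labeled adjacency matrices. Combining the two displays yields $\theta_{\ell} = \mathbb{E}_{\mathcal{G}}[\hat{\theta}_{\ell}] - C$ for every labeling $\ell$.

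Substituting this identity for $\ell_1$ and $\ell_2$ and cancelling $C$ from both sides of the inequality gives the equivalence $\theta_{\ell_1} < \theta_{\ell_2} \iff \mathbb{E}_{\mathcal{G}}[\hat{\theta}_{\ell_1}] < \mathbb{E}_{\mathcal{G}}[\hat{\theta}_{\ell_2}]$, which is the claim. There is no serious obstacle here: the proof is essentially a bookkeeping argument whose entire content lies in recognizing that (a) the absolute value may be dropped thanks to the monotonicity hypothesis and (b) the $\mathbf{d}_{\mathbf{G}}$ term is a labeling-free offset. The only thing worth being careful about is the order of expectations, i.e.\ confirming that $\mathbb{E}_{\mathcal{G}}[\hat{\theta}_{\ell}]$ really equals the single-pair expectation regardless of $N$, which follows immediately from the i.i.d.\ sampling assumption stated in the theorem.
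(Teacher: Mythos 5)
Your proof is correct and matches the argument the paper intends: the hypothesis $\mathbf{d}_{\mathbf{A}} \geq \mathbf{d}_{\mathbf{G}}$ lets you drop the absolute value, the $\mathbf{d}_{\mathbf{G}}$ term is a labeling-independent offset, and the i.i.d.\ sampling makes $\mathbb{E}_{\mathcal{G}}[\hat{\theta}_{\ell}]$ the single-pair mean, so the two orderings coincide. The paper gives no explicit proof in the main text, but its surrounding discussion (``the smaller the estimate $\hat{\theta}_{\ell}$ the smaller the absolute difference'') reflects exactly this bookkeeping argument, so there is nothing further to add.
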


Theorem~\ref{thrm:graph_norm_expectation} enables us to compare different labeling procedures in an unsupervised manner via a comparison of the corresponding estimators.  Under the assumption $\mathbf{d}_{\mathbf{A}} \geq \mathbf{d}_{\mathbf{G}}$, the smaller the estimate $\hat{\theta}_{\ell}$ the smaller the absolute difference. Therefore, we can simply choose the labeling $\ell$ for which $\hat{\theta}_{\ell}$ is minimal. The assumption $\mathbf{d}_{\mathbf{A}} \geq \mathbf{d}_{\mathbf{G}}$ holds, for instance, for the edit distance on graphs and the Hamming distance on adjacency matrices. Finally, note that all of the above results can be extended to directed graphs. 

The graph normalization problem and the application of appropriate graph labeling procedures for the normalization of local graph structures is at the core of the proposed approach. Within the \patchysan framework, we normalize the neighborhood graphs of a vertex $v$. The labeling of the vertices is therefore constrained by the graph distance to  $v$: for any two vertices $u, w$, if $u$ is closer to $v$ than $w$, then $v$ is always ranked higher than $w$. 
This definition ensures that $v$ has always rank $1$, and that the closer a vertex is to $v$ in $G$, the higher it is ranked in the vector space representation.

Since most labeling methods are not injective, it is necessary to break ties between same-label nodes. To do so, we use \nauty~\cite{McKay:2014}. \nauty accepts prior node partitions as input and breaks remaining ties by choosing the lexicographically maximal adjacency matrix. It is known that graph isomorphism is in PTIME for graphs of bounded degree~\cite{luks:1982}. Due to the constant size $k$ of the neighborhood graphs, the algorithm runs in time polynomial in the size of the original graph and, on average, in time linear in $k$~\cite{Babai:1980}. Our experiments verify that computing a canonical labeling of the graph neigborhoods adds a negligible overhead.

Algorithm~\ref{alg:normalization} lists the normalization procedure. If the size of the input set $U$ is larger than $k$, it first applies the ranking based on $\ell$ to select the top $k$ nodes and recomputes a ranking on the smaller set of nodes. If the size of $U$ is smaller than $k$, it adds disconnected dummy nodes. Finally, it induces the subgraph on the vertices $N$ and canonicalizes the graph taking the ranking $\mathbf{r}$ as prior coloring.

We can relate \textsc{Patchy-san} to CNNs for images as follows. 
\begin{thmx}
Given a sequence of pixels taken from an image. Applying \textsc{Patchy-san} with receptive field size $(2m-1)^2$, stride $s$, no zero padding, and $1$-WL normalization to the sequence is identical (up to a fixed permutation of the receptive field) to the first layer of a \textsc{CNN} with receptive field size $2m-1$, stride $s$, and no zero padding.  
\end{thmx}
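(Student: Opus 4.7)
The plan is to interpret the image as the grid graph with $8$-connectivity (king-move adjacency), so that graph distance coincides with the Chebyshev distance. Under this model, for any interior pixel $v$, the set of nodes within graph distance $m-1$ of $v$ is exactly the $(2m-1)\times(2m-1)$ square centred at $v$, which has $(2m-1)^2$ elements. I would first check that \textsc{NeighAssemb} with target size $k=(2m-1)^2$, when run from such an interior pixel, returns precisely this square: BFS adds complete shells at once, and the set $N$ first reaches size $(2m-1)^2$ exactly when the shell at Chebyshev distance $m-1$ has been added. The ``no zero padding'' assumption restricts the pixel sequence to pixels whose full window lies inside the image, so every call to \textsc{ReceptiveField} recovers the same pixels the CNN would read at that stride.

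Next I would argue that the normalization step produces the same ordering of pixels within every receptive field, up to a single fixed permutation $\pi$ of $\{1,\dots,(2m-1)^2\}$. The key observation is that every interior $(2m-1)\times(2m-1)$ window, viewed as an induced subgraph rooted at its centre, is the \emph{same} rooted graph $H_m$. Consequently, the distance-to-root constraint in \textsc{NormalizeGraph} partitions the nodes into the same shells for every call; $1$-WL color refinement, being a deterministic function of the rooted graph, assigns the same color sequence in every call; and \nauty, applied to a fixed rooted coloured input, produces the same canonical adjacency matrix. Hence there is a single permutation $\pi$, independent of $v$, mapping the row-major positions of the window around $v$ to their positions in $G_{\mathtt{norm}}$.

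Finally I would compare the two first layers. For a filter with weights $(w_1,\dots,w_{(2m-1)^2})$, the standard CNN computes $\sum_i w_i\, x_i$ over the pixels in row-major order, whereas \patchysan computes $\sum_i w_i\, x_{\pi^{-1}(i)} = \sum_i w_{\pi(i)}\, x_i$, i.e.\ the same linear form with weights permuted by the fixed $\pi$. Since $\pi$ is identical for every receptive field in the sequence and the filter weights are learned parameters, the two architectures produce identical outputs up to this fixed relabeling of weights, which is exactly the claimed ``fixed permutation of the receptive field''.

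The main obstacle is the normalization step: one must rule out that \nauty's tie-breaking could behave inconsistently across different root pixels, since $1$-WL alone does not in general split nodes that are structurally equivalent (e.g.\ the four corners of the window all share a colour). This reduces to the fact that all interior receptive fields are isomorphic as rooted, $1$-WL-coloured graphs, combined with the determinism of \nauty on such inputs. A secondary subtlety is the choice of $8$-connectivity rather than $4$-connectivity for the grid; with $4$-connectivity, BFS balls are diamonds rather than squares and the theorem would not hold in its current form.
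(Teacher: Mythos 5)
Your proposal is correct and follows essentially the same route as the paper, whose proof is only the one-sentence observation that for a square grid the $1$-WL normalized receptive field of a vertex is always a square grid graph with a unique vertex order; your argument fills in the details of exactly that claim (the BFS ball equals the $(2m-1)\times(2m-1)$ window, all interior windows are the same rooted coloured graph, $1$-WL and \nauty are deterministic, hence a single fixed permutation relating the two linear first layers). The two subtleties you flag---the need for $8$-connectivity so that graph balls are squares rather than diamonds, and the consistency of \nauty's tie-breaking across roots (which ultimately requires that neighbourhoods be presented to \nauty in an offset-consistent order, since $1$-WL alone cannot split the dihedral symmetry orbits of the window)---are precisely the points the paper's sketch glosses over, so your treatment is, if anything, more careful than the original.
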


\begin{proof}
It is possible to show that if an input graph is a square grid, then the $1$-WL normalized receptive field constructed for a vertex is always a square grid graph with a unique vertex order. 
\end{proof}

\subsection{Convolutional Architecture}

\patchysan is able to process both vertex and edge attributes (discrete and continuous). Let $\mathtt{a}_v$ be the number of vertex attributes and let $\mathtt{a}_e$ be the number of edge attributes. For each input graph $G$, it applies normalized receptive fields for vertices and edges which results in one $(w, k, \mathtt{a}_v)$ and one $(w, k, k, \mathtt{a}_e)$ tensor. These can be reshaped to a $(wk, \mathtt{a}_v)$ and a $(wk^2, \mathtt{a}_e)$ tensors. Note that $\mathtt{a}_v$ and $\mathtt{a}_e$ are the number of input channels. We can now apply a $1$-dimensional convolutional layer with stride and receptive field size $k$ to the first and $k^2$ to the second tensor. The rest of the architecture can be chosen arbitrarily. We may use merge layers to combine convolutional layers representing nodes and edges, respectively.

\section{Complexity and Implementation}
\patchysan's algorithm for creating receptive fields is highly efficient and  naively parallelizable because the fields are generated independently. We can show the following asymptotic worst-case result.

\begin{thmx}
Let $N$ be the number of graphs, let $k$ be the receptive field size, $w$ the width, and $O(f(n,m))$ the complexity of computing a given labeling $\ell$ for a graph with $n$ vertices and $m$ edges. \textsc{Patchy-san} has a worst-case complexity of $O(N w (f(n,m)+ n\log(n) + \exp(k)))$ for computing the receptive fields for $N$ graphs.
\end{thmx}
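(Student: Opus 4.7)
The plan is to account for the total cost by decomposing \textsc{Patchy-san} into its per-graph work and, within each graph, its per-receptive-field work, and then aggregate.

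First I would observe that the $N$ input graphs are processed independently, so the total cost is $N$ times the cost of processing a single graph. For a single graph $G=(V,E)$ with $n$ vertices and $m$ edges, the work is split into three phases: (i) computing a global graph labeling $\ell$, which by assumption costs $O(f(n,m))$; (ii) \textsc{SelNodeSeq} (Algorithm~1), which selects the top $w$ vertices in the labeling-induced order via a sort in $O(n \log n)$ time and then loops over the selected sequence with stride $s$; and (iii) for each of the $w$ iterations of that loop, a call to \textsc{ReceptiveField} (Algorithm~3) which invokes \textsc{NeighAssemb} and \textsc{NormalizeGraph}.

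Next I would bound the per-receptive-field cost. \textsc{NeighAssemb} is a truncated breadth-first search that terminates once $k$ vertices have been collected, so it visits $O(k)$ nodes plus the edges incident to them, bounded by $O(n+m)$ and in any case absorbed by the other terms of the claimed bound. \textsc{NormalizeGraph} (Algorithm~4) then recomputes the labeling $\ell$ restricted to the neighborhood subgraph on at most $k$ vertices, which costs at most $O(f(n,m))$ since the subgraph has no more vertices or edges than $G$; it sorts to obtain the ranking, incurring $O(k \log k)$; and it invokes \nauty to canonicalize a graph on exactly $k$ vertices subject to the prior coloring, which in the worst case runs in time $O(\exp(k))$. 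Summing these, one receptive-field construction costs $O(f(n,m) + k\log k + \exp(k))$, and the $k\log k$ term is absorbed into $\exp(k)$.

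Putting the phases together, the per-graph cost is
\[
O\bigl(f(n,m) + n\log n + w\,(f(n,m) + \exp(k))\bigr),
\]
which is dominated by $O(w\,(f(n,m) + n\log n + \exp(k)))$ since $w \geq 1$ and we may crudely bound $n\log n$ either by itself inside the $w$ factor or keep it once; either way it fits under the stated form. Multiplying by $N$ gives the claimed worst-case complexity of $O(Nw\,(f(n,m)+ n\log n + \exp(k)))$.

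The only nontrivial step is justifying the $\exp(k)$ term for the canonicalization of neighborhood subgraphs; this is the main obstacle, since graph canonicalization is not known to be polynomial in general. I would invoke the worst-case behavior of \nauty, noting that it operates on a graph of fixed size $k$, so the exponential blow-up is in $k$ rather than in $n$. The other bounds (BFS truncated at $k$, partial sorting, and re-running $\ell$ on a subgraph of $G$) are straightforward and do not affect the leading-order expression.
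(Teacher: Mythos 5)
Your proposal is correct and follows essentially the same decomposition as the paper's proof: per-graph labeling in $O(f(n,m))$ plus top-$w$ retrieval in $O(n\log n)$, and per receptive field a re-application of $\ell$ bounded by $O(f(n,m))$ together with \nauty canonicalization in $O(\exp(k))$ on a $k$-node graph. One minor slip that does not affect the bound: the assembled neighborhood can exceed $k$ vertices (the paper bounds its size by $(k-2)\overline{d}\leq n$), but your cost estimate $O(f(n,m))$ for labeling it remains valid since it is a subgraph of $G$.
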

\begin{proof}
Node sequence selection requires the labeling of each input graph and the retrieval of the $k$ highest ranked nodes. For the creation of normalized graph patches, most computational effort is spent applying the labeling procedure $\ell$ to a neighborhood whose size may be larger than $k$. Let $\overline{d}$ be the maximum degree of the input graph $G$, and $U$ the neighborhood returned by Algorithm~\ref{alg:assembly}. We have $|U| \leq (k-2) \overline{d} \leq n$.
The term $\exp(k)$ comes from the worst-case complexity of the graph canonicalization algorithm \nauty on a $k$ node graph~\cite{miyazaki:1997}. 
\end{proof} 

For instance, for the Weisfeiler-Lehman algorithm, which has a complexity of  $O((n+m) \log(n))$~\cite{Berkholz:2013}, and constants $w \ll n$ and $k \ll n$, the complexity of \textsc{Patchy-san} is linear in $N$ and quasi-linear in $m$ and $n$. 



\section{Experiments}
We conduct three types of experiments: a runtime analysis, a qualitative analysis of the learned features, and a comparison to graph kernels on benchmark data sets. 

\begin{figure}
\includegraphics[width=0.47\textwidth]{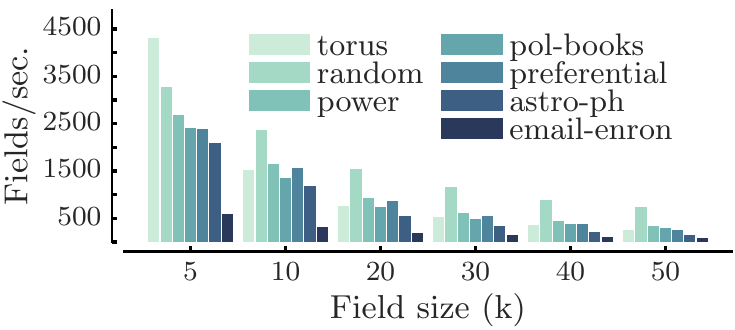}
\caption{\label{fig-runtime} Receptive fields per second rates on different graphs.}
\end{figure}

\begin{figure*}[t!]
\captionsetup[subfigure]{labelformat=empty}
\centering
\begin{tabular}{m{0.6in} m{0.6in} m{0.4in} m{0.4in} m{0.4in} m{0.1in} m{0.6in} m{0.6in} m{0.4in} m{0.4in} m{0.4in}}
\subfloat[]{\includegraphics[width = 0.8in]{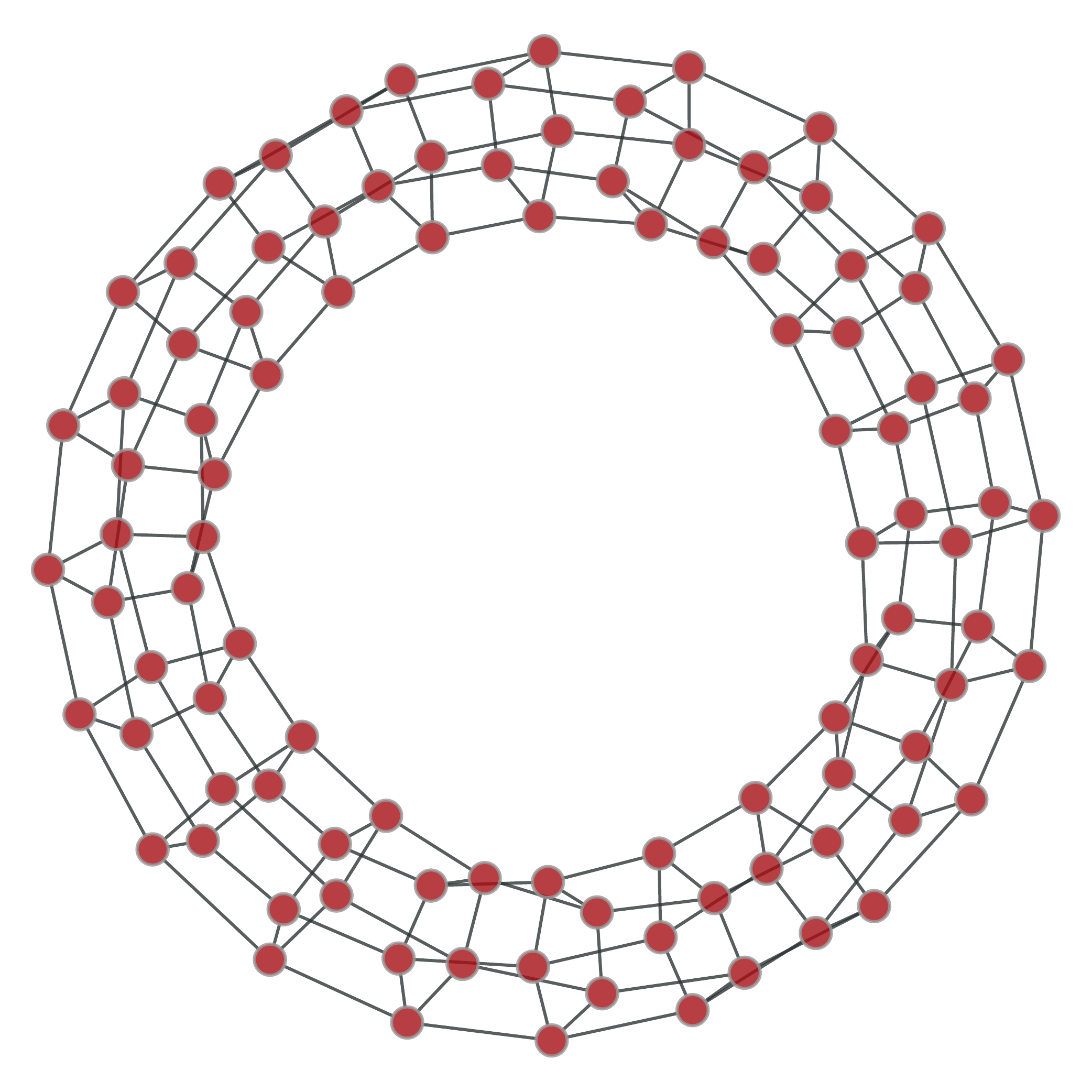}} &
\subfloat[]{\includegraphics[width = 0.7in]{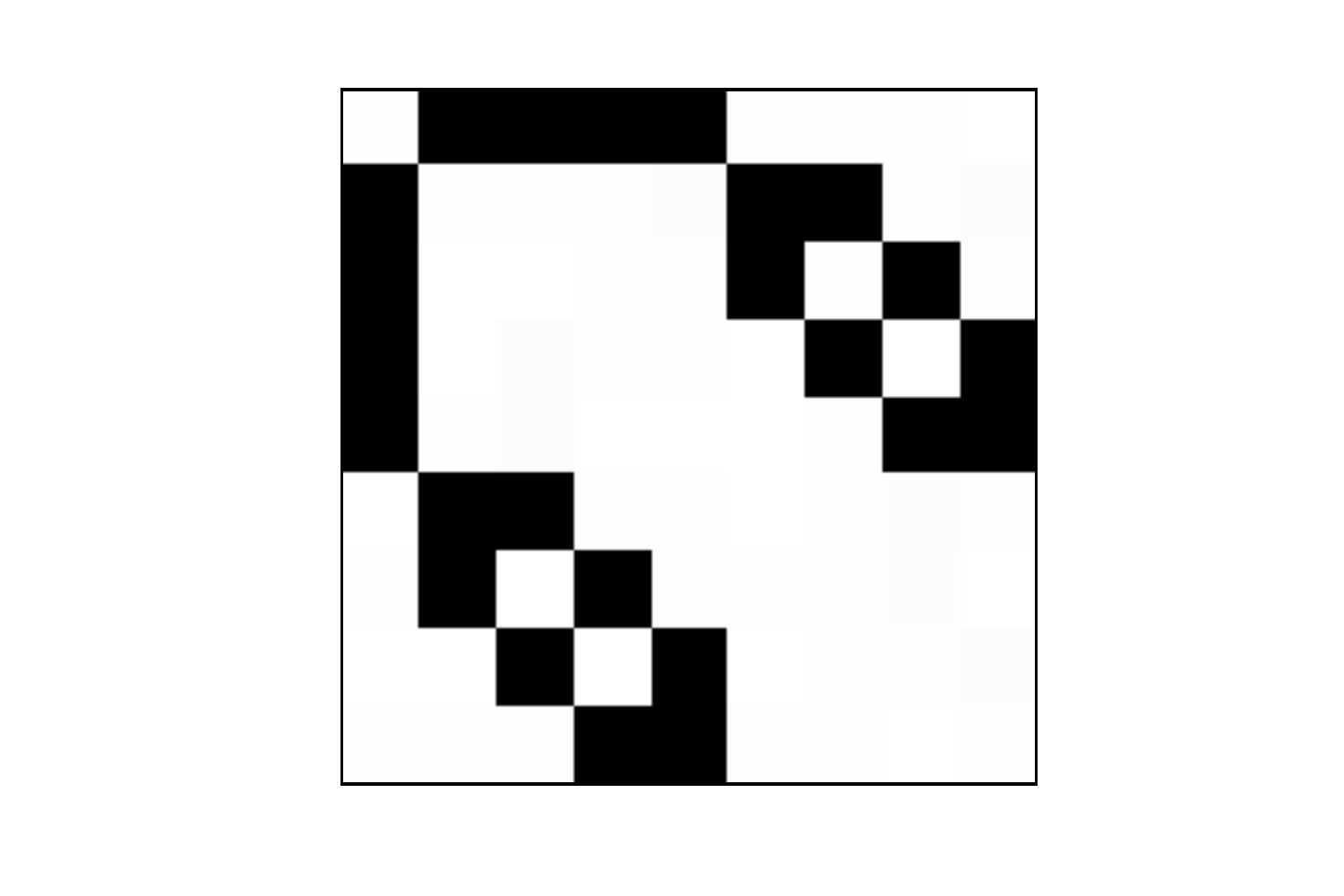}} &
\subfloat[]{\includegraphics[width = 0.4in]{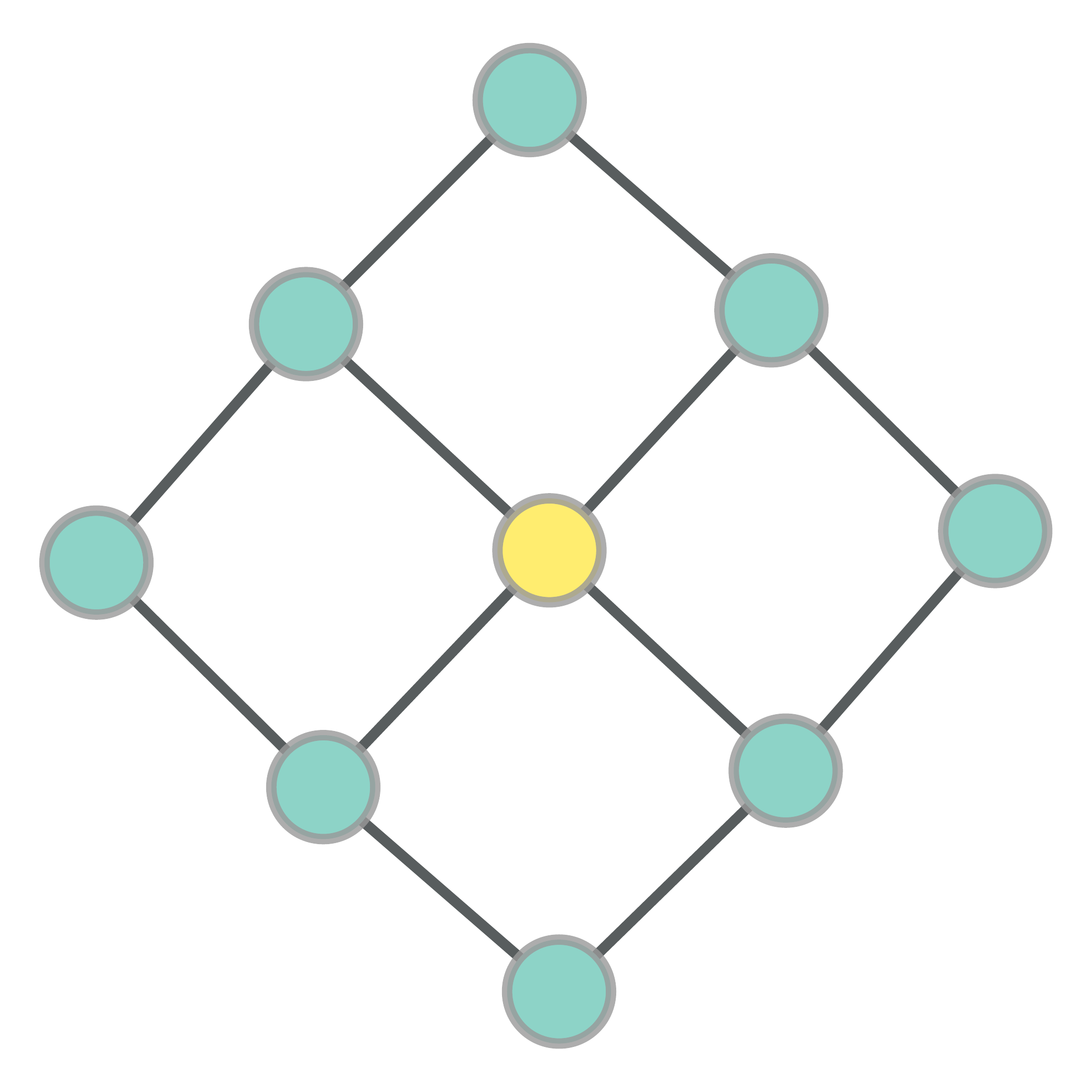}} &
\subfloat[]{\includegraphics[width = 0.4in]{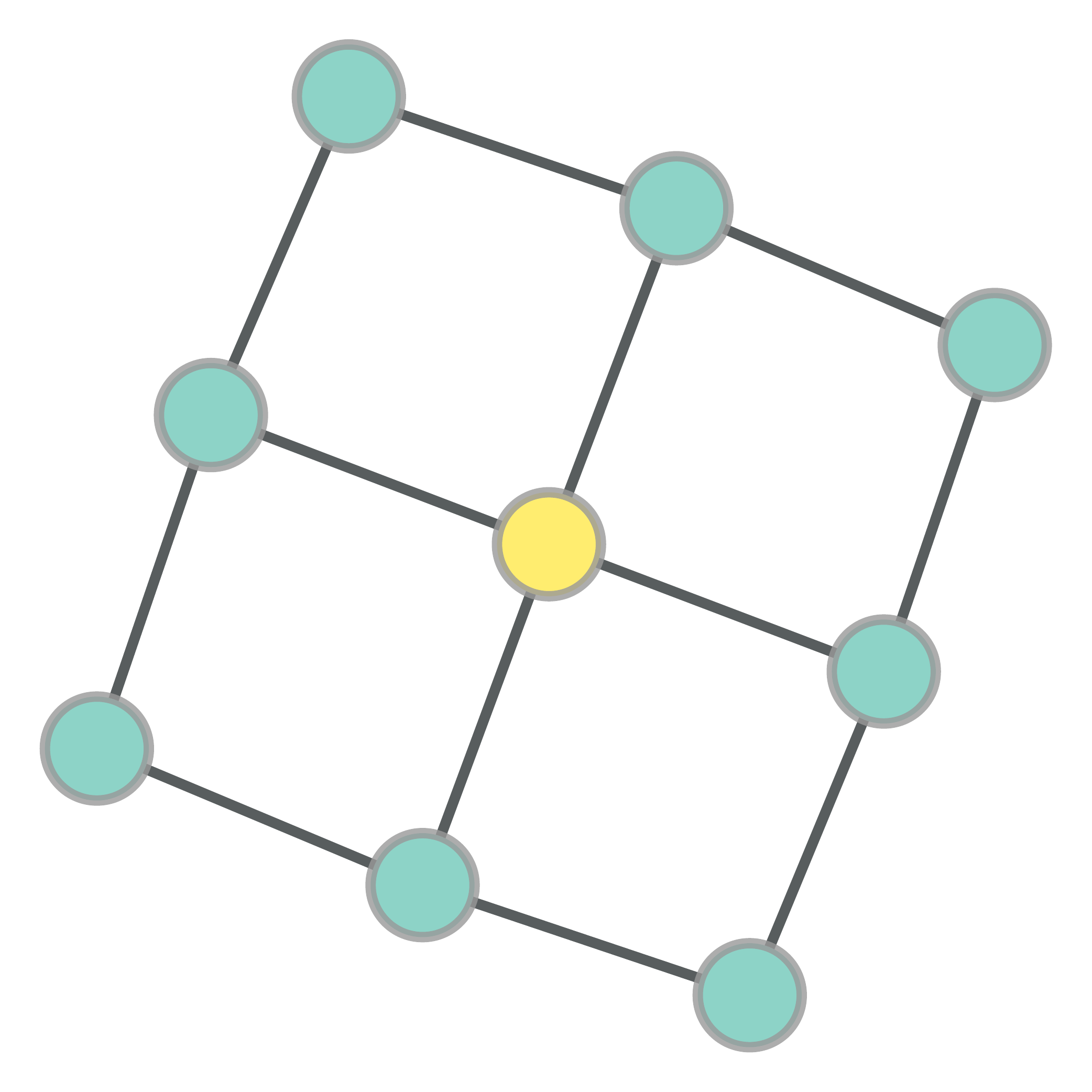}} &
\subfloat[]{\includegraphics[width = 0.4in]{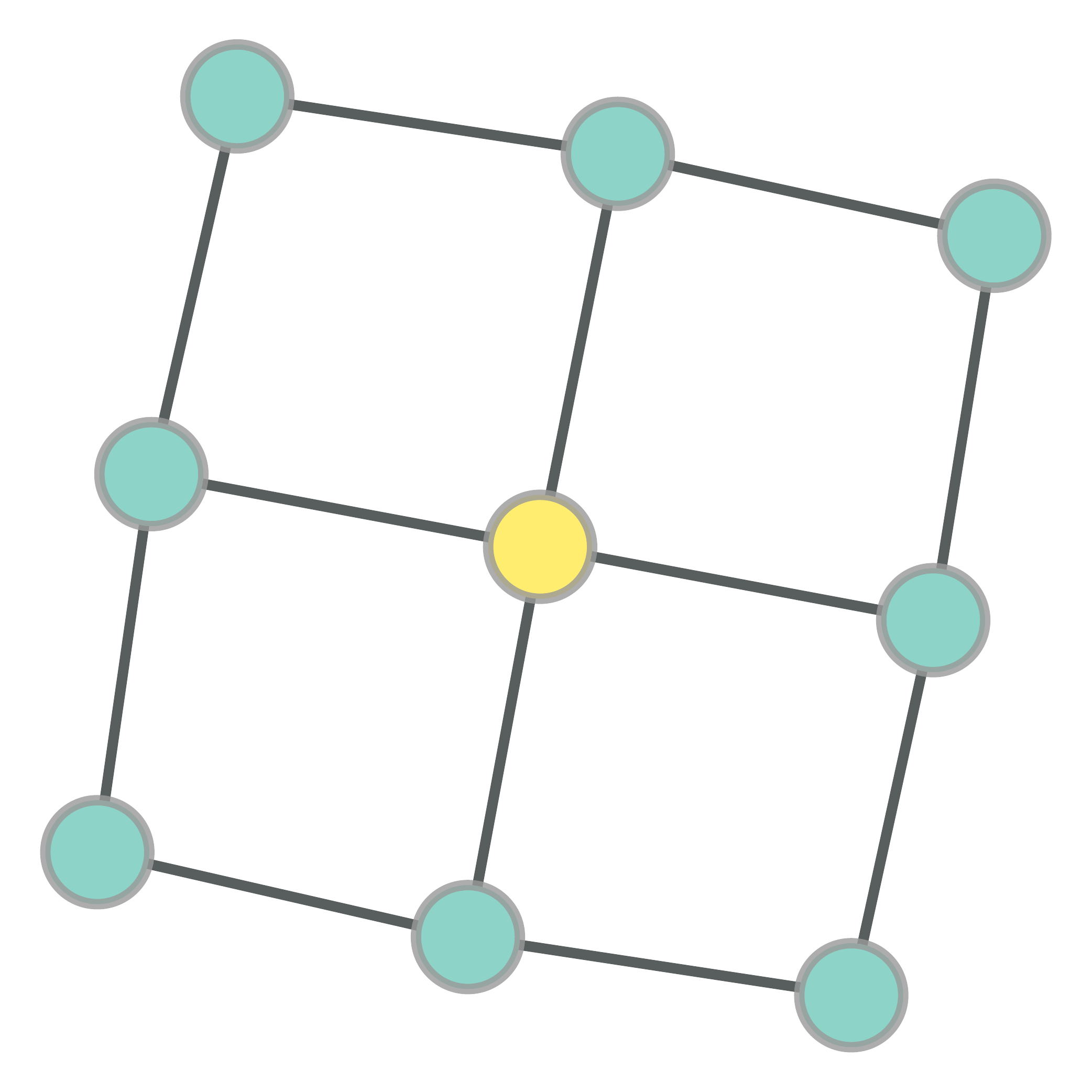}} &
 & 
\subfloat[]{\includegraphics[width = 0.8in]{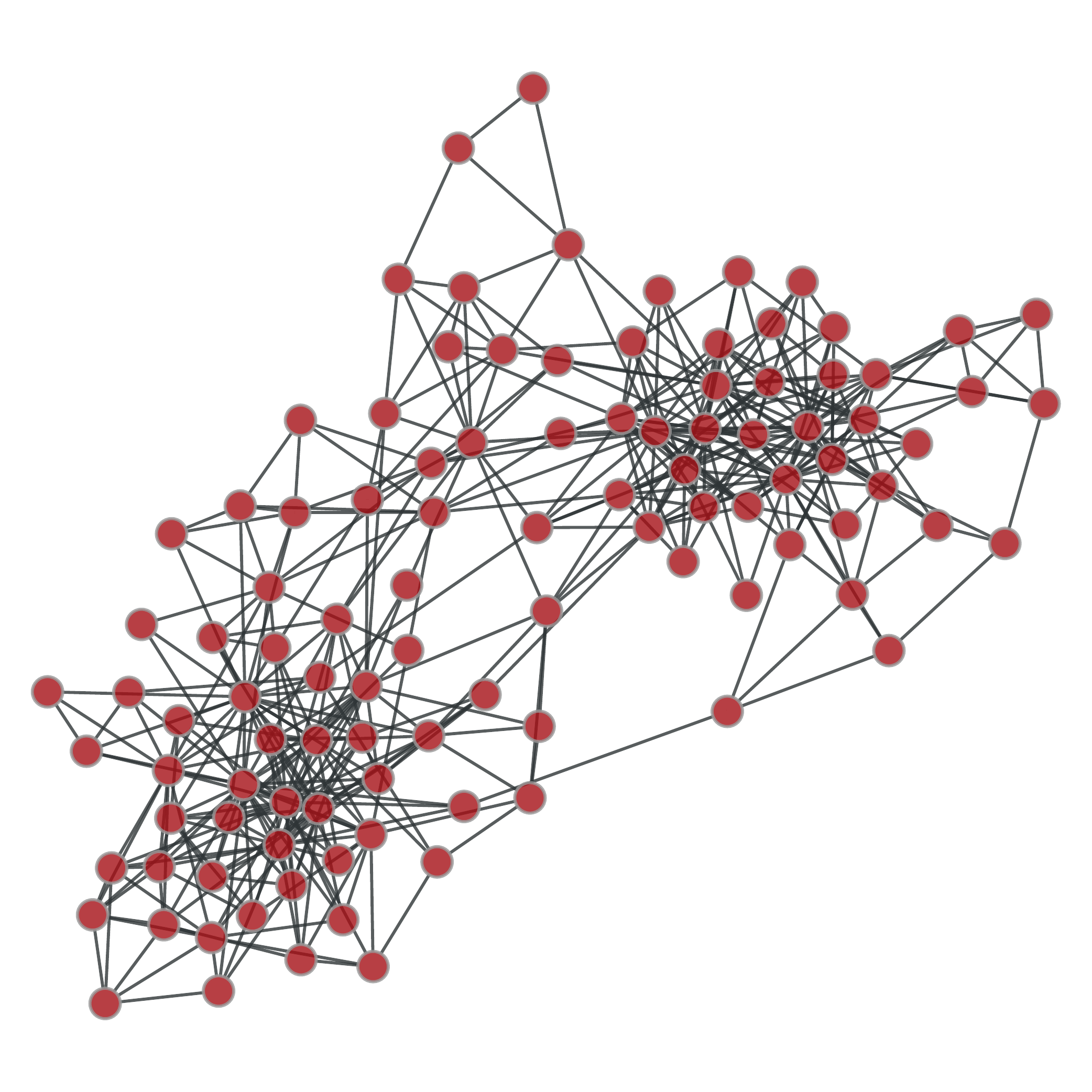}} &
\subfloat[]{\includegraphics[width = 0.7in]{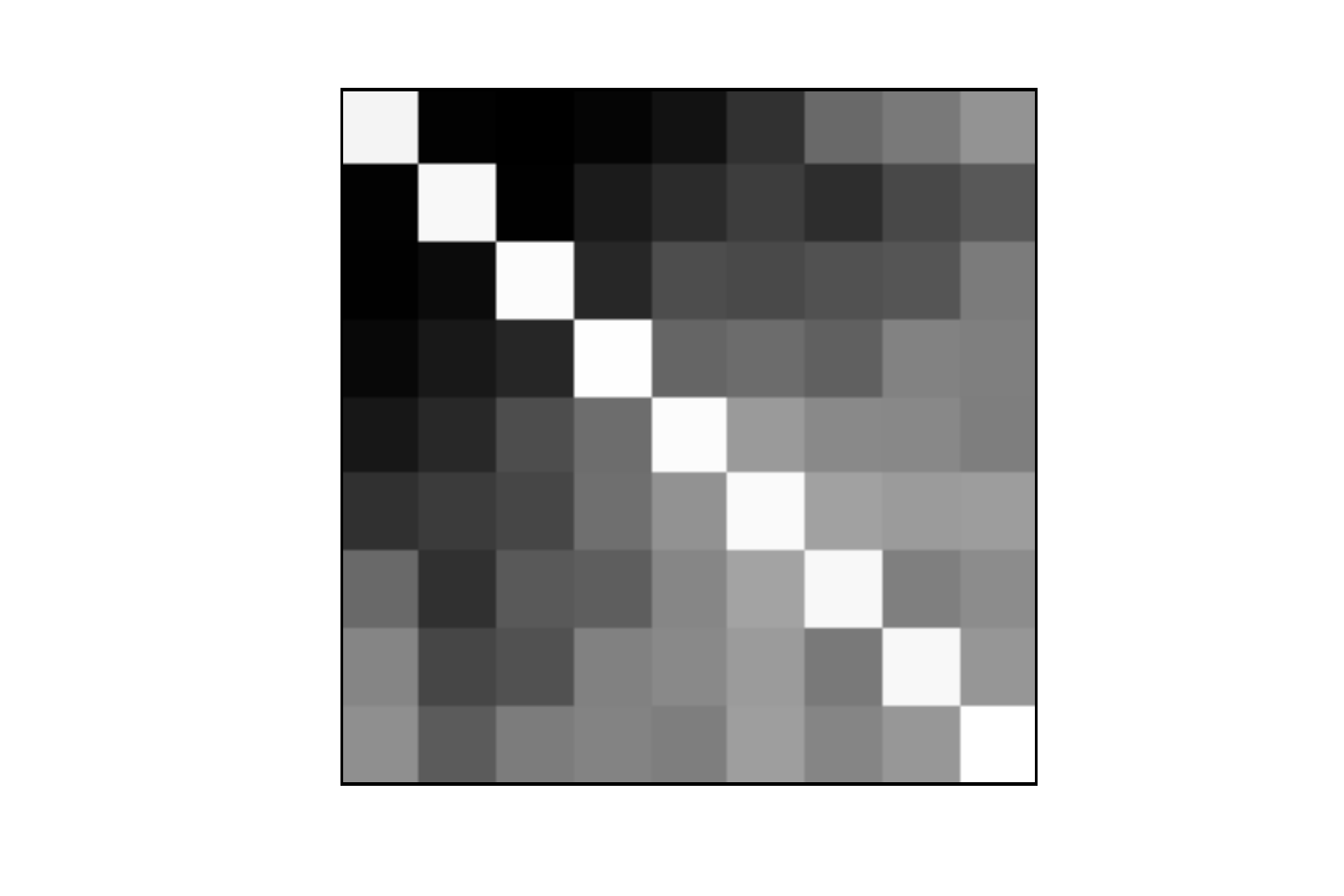}} &
\subfloat[]{\includegraphics[width = 0.4in]{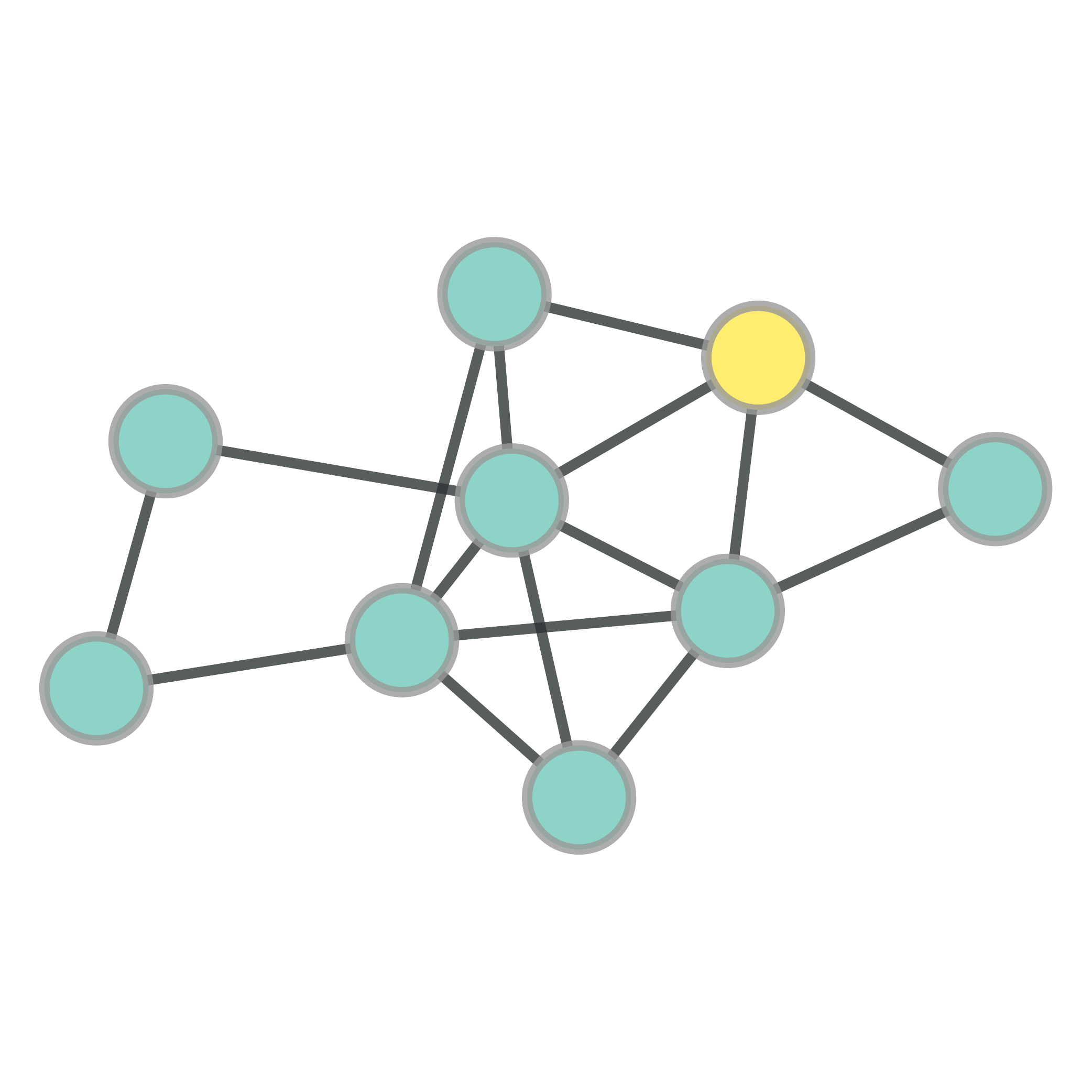}} &
\subfloat[]{\includegraphics[width = 0.4in]{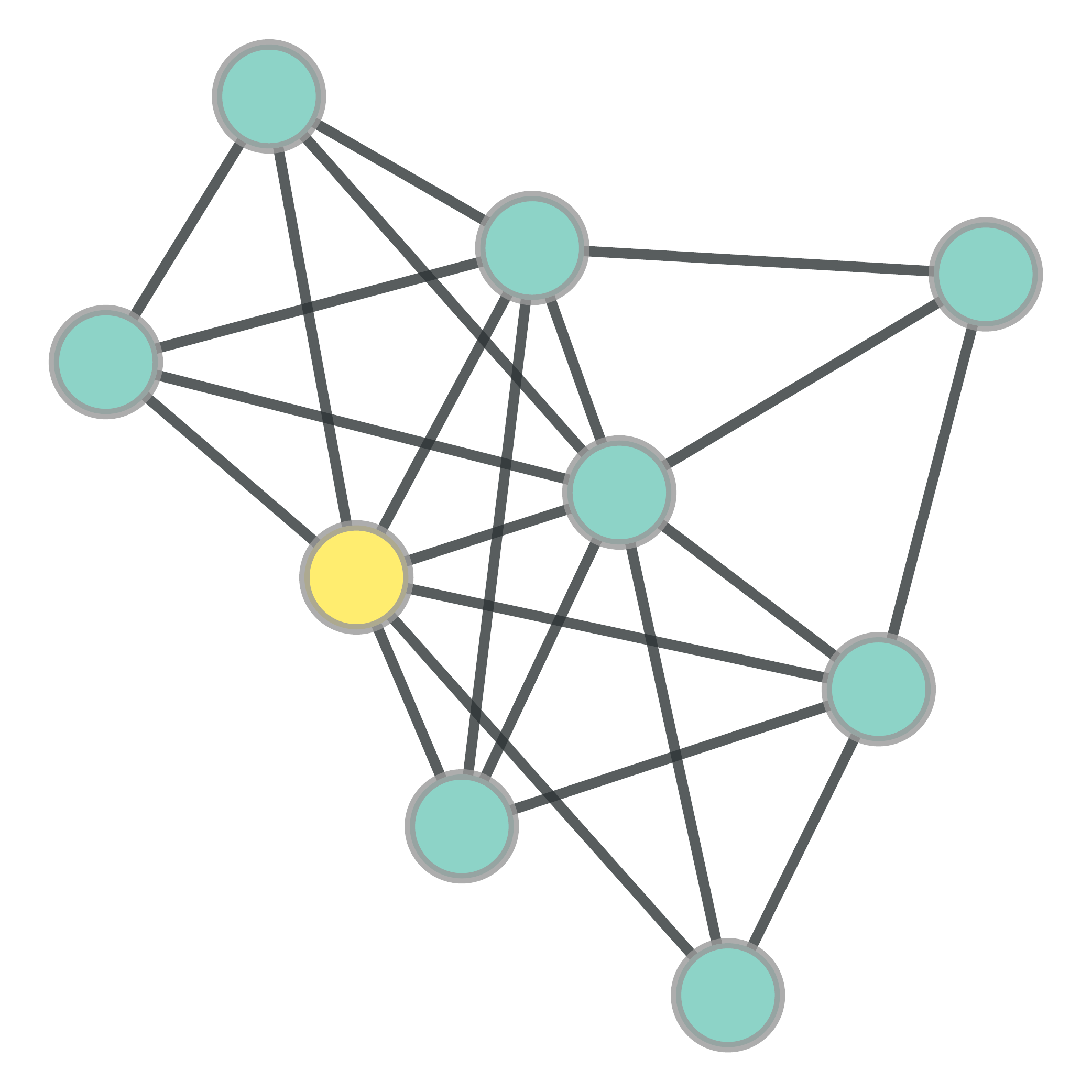}} &
\subfloat[]{\includegraphics[width = 0.4in]{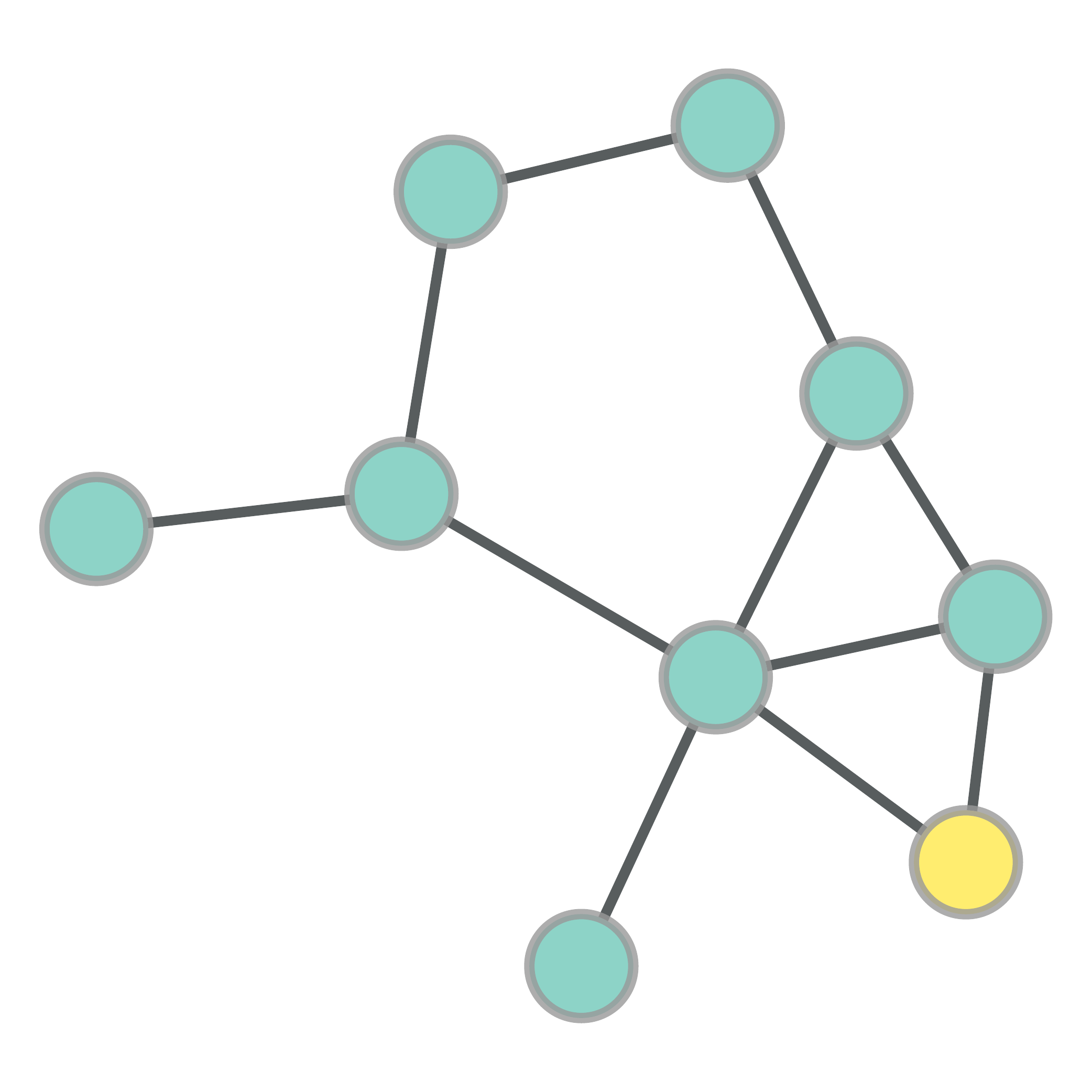}} \\[-6.5ex]

\multirow{2}{*}[1.8em]{\subfloat[]{\includegraphics[width = 0.8in]{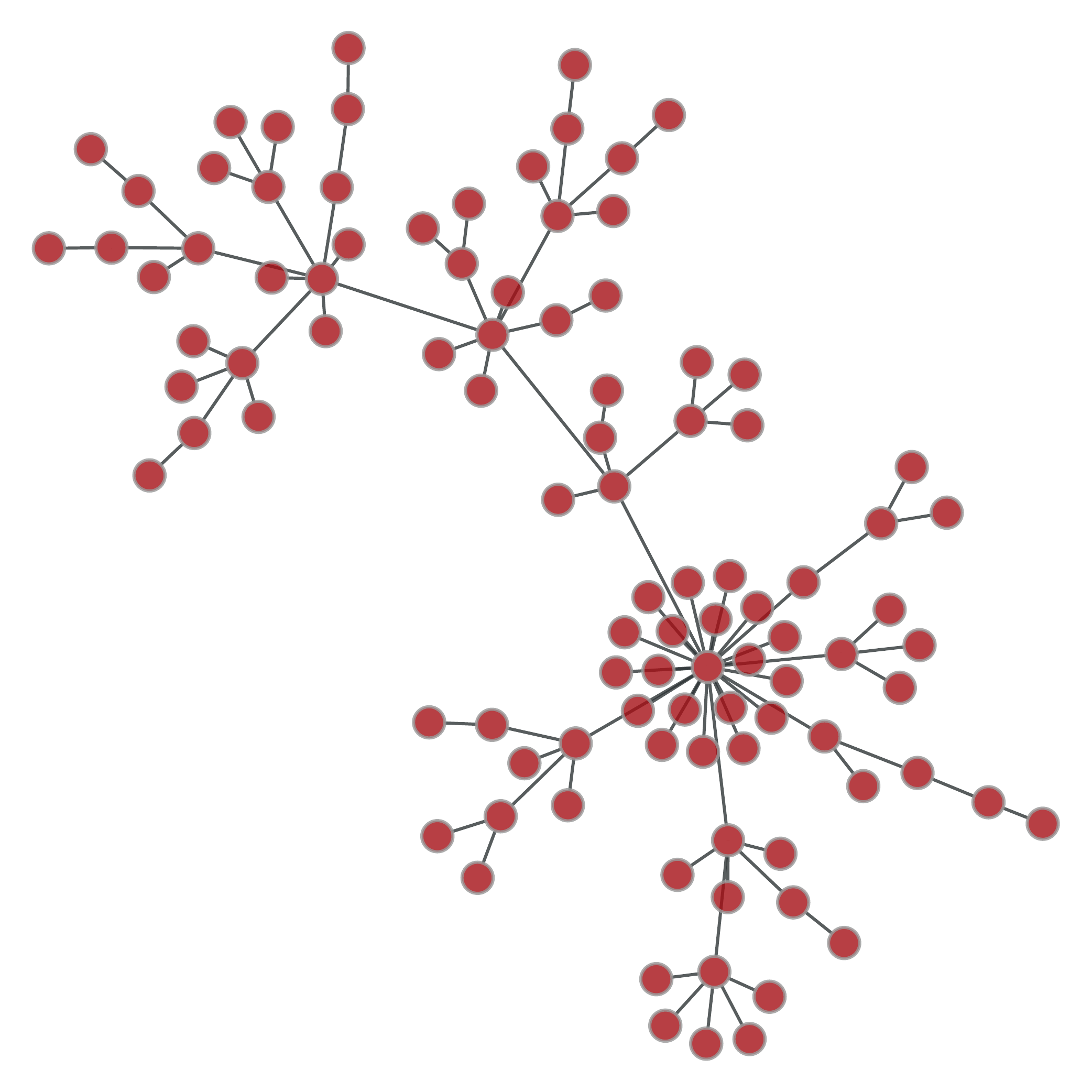}}} &
\subfloat[]{\includegraphics[width = 0.7in]{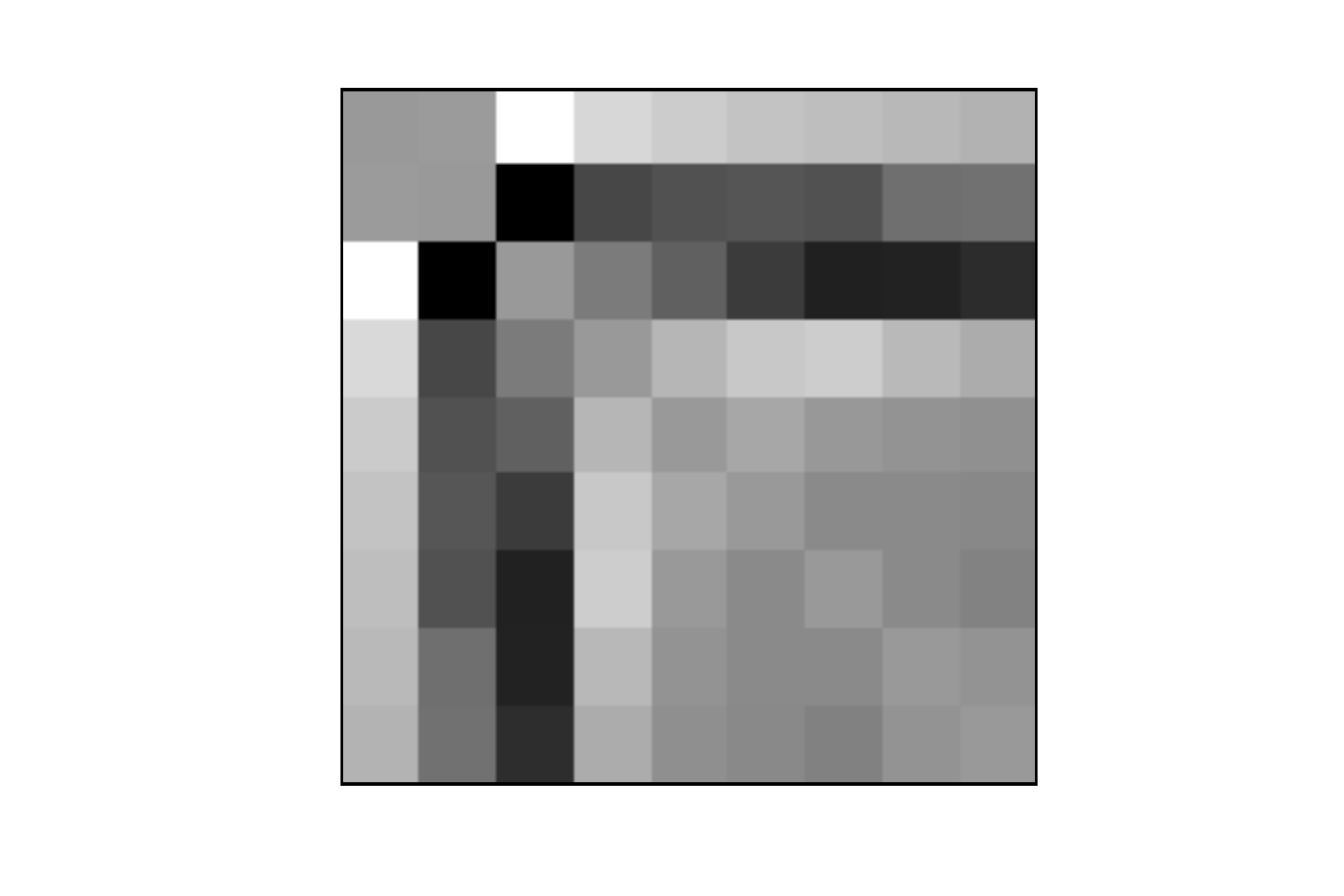}} &
\subfloat[]{\includegraphics[width = 0.4in]{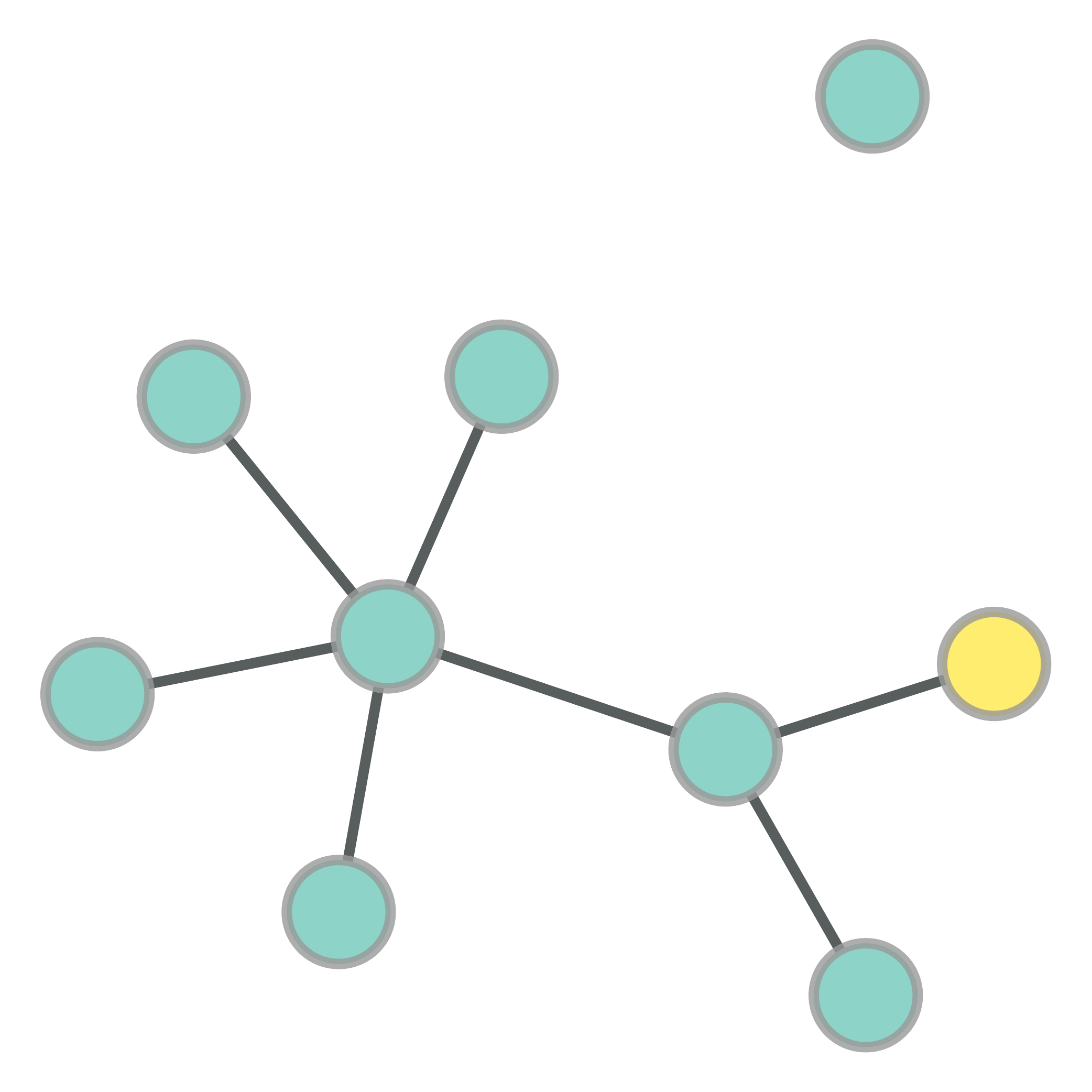}} &
\subfloat[]{\includegraphics[width = 0.4in]{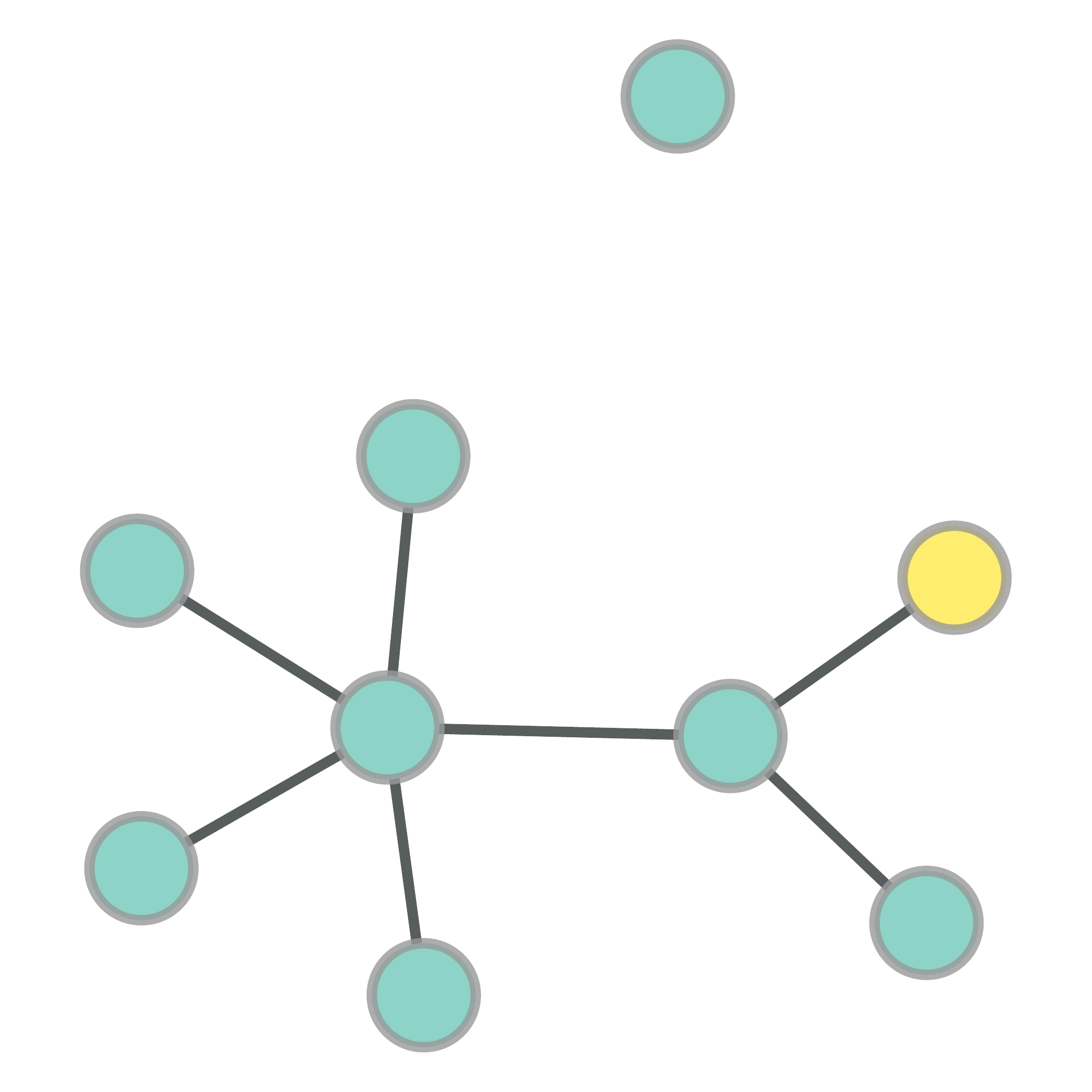}} &
\subfloat[]{\includegraphics[width = 0.4in]{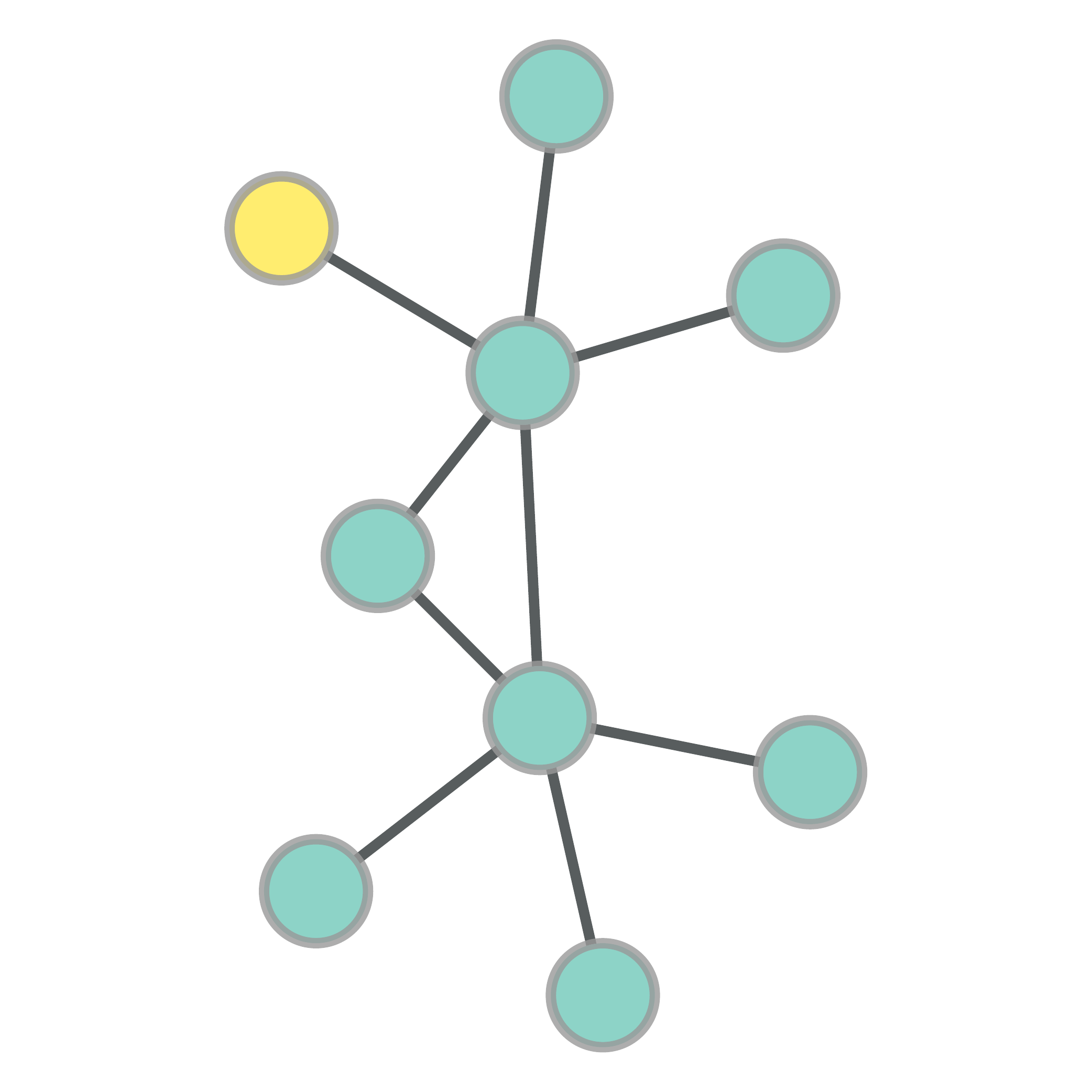}} &
  &
\multirow{2}{*}[2em]{\subfloat[]{\includegraphics[width = 0.8in]{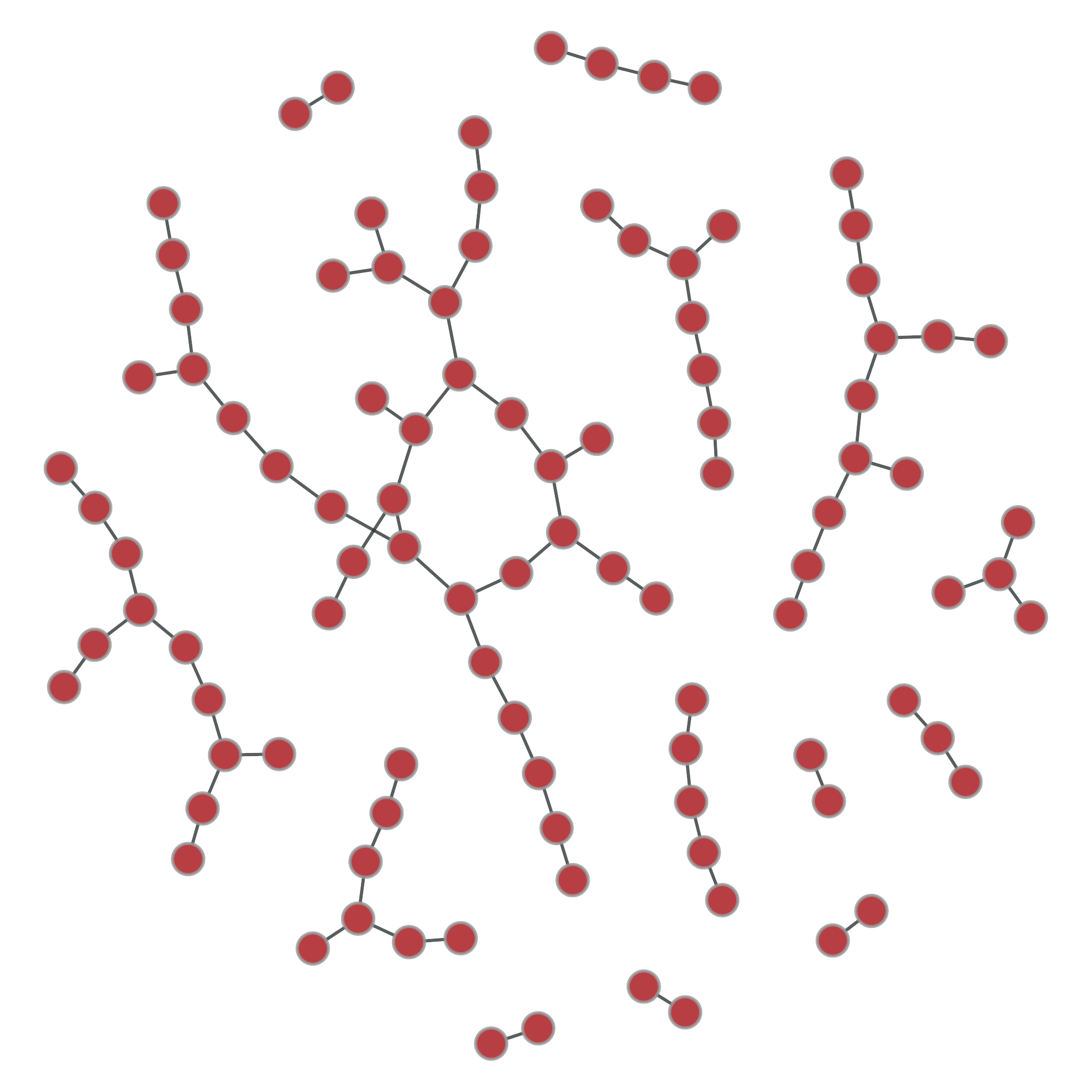}}} &
\subfloat[]{\includegraphics[width = 0.7in]{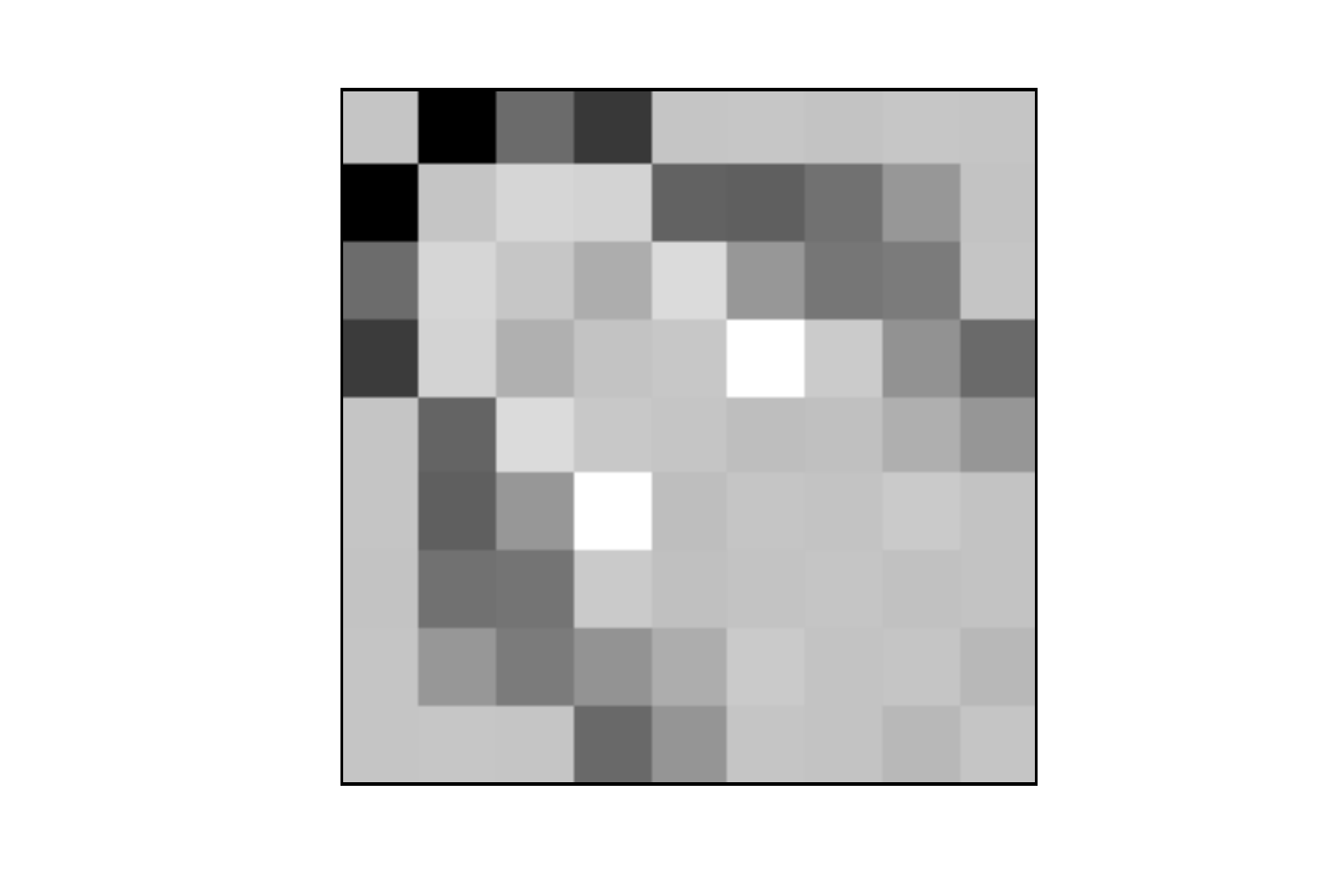}} &
\subfloat[]{\includegraphics[width = 0.4in]{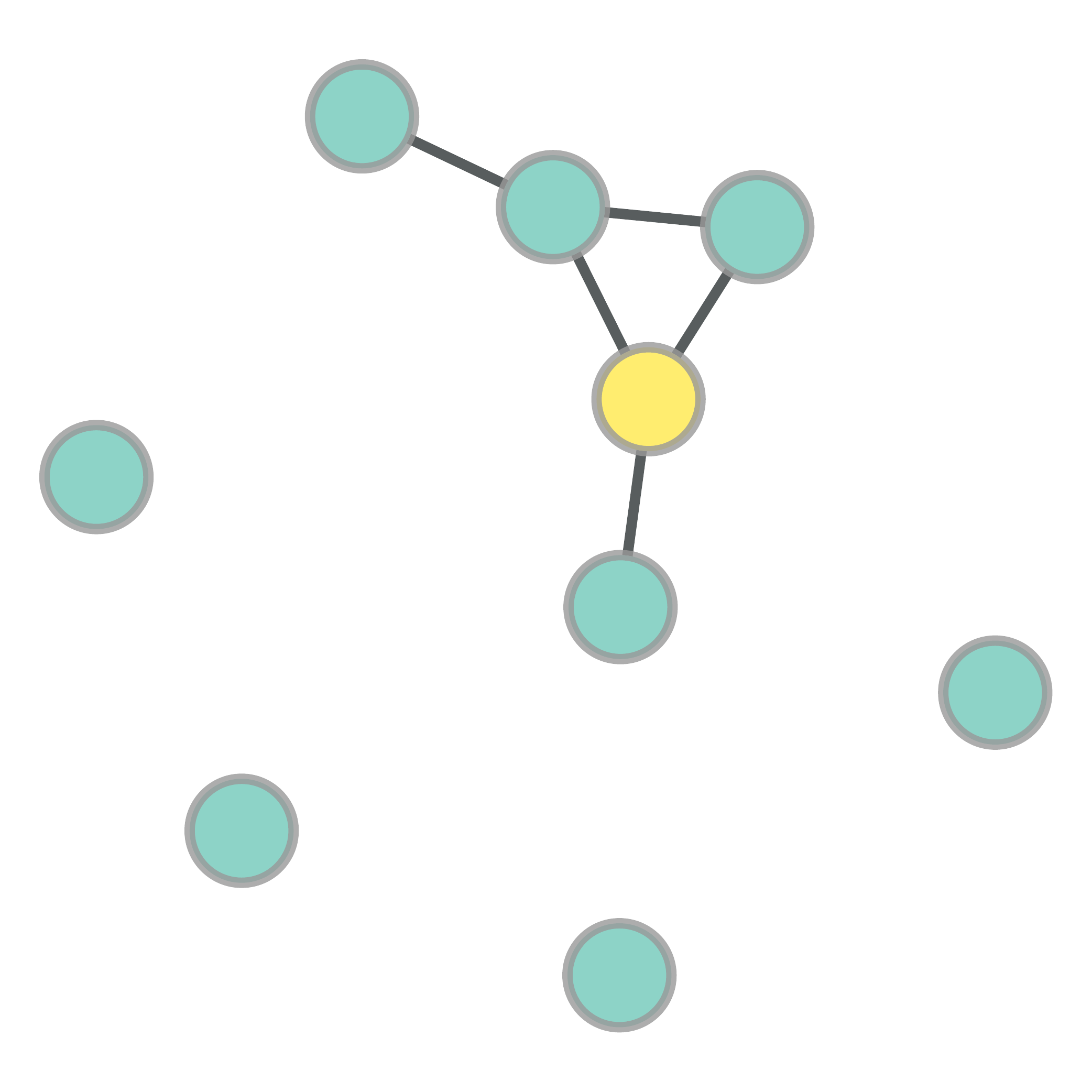}} &
\subfloat[]{\includegraphics[width = 0.4in]{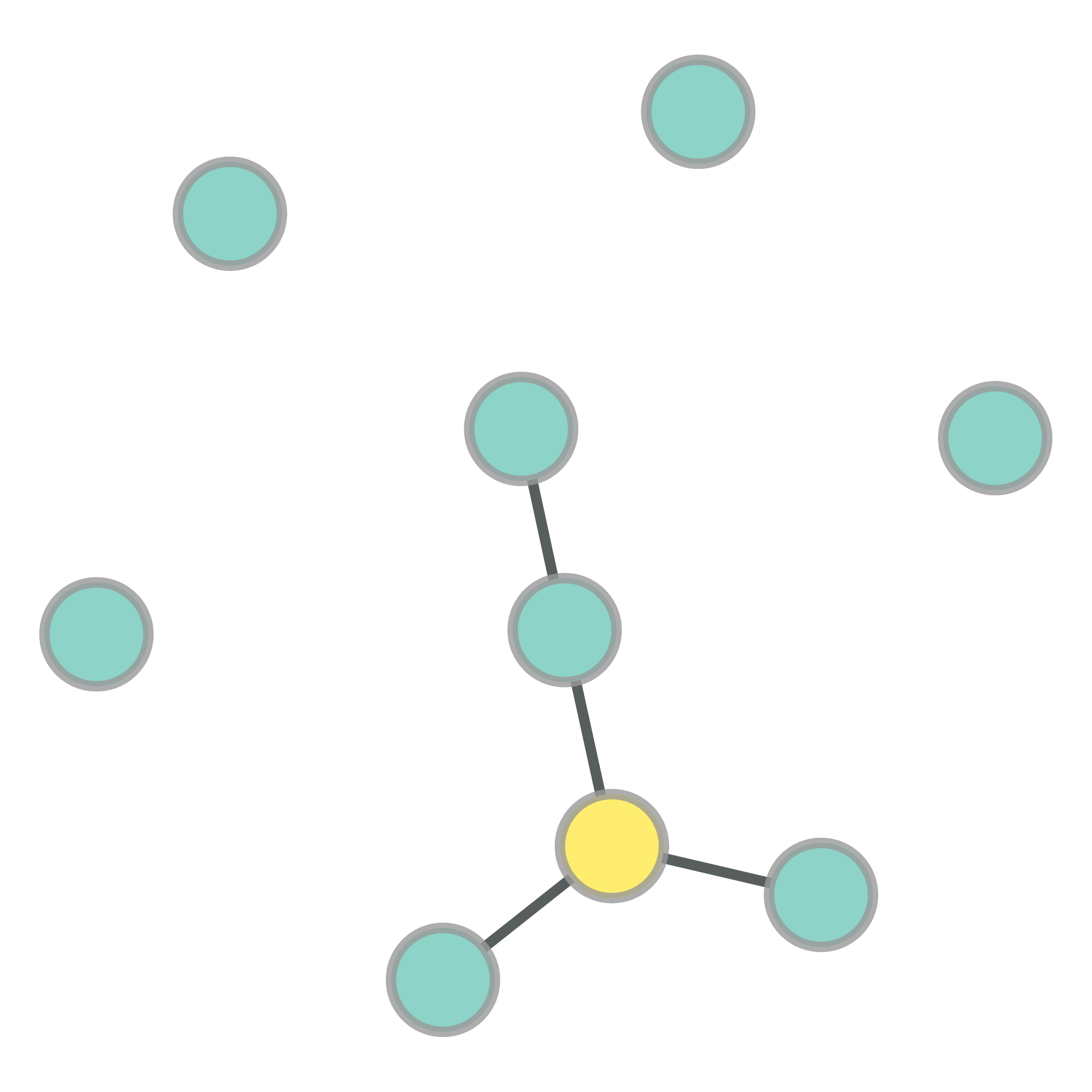}} &
\subfloat[]{\includegraphics[width = 0.4in]{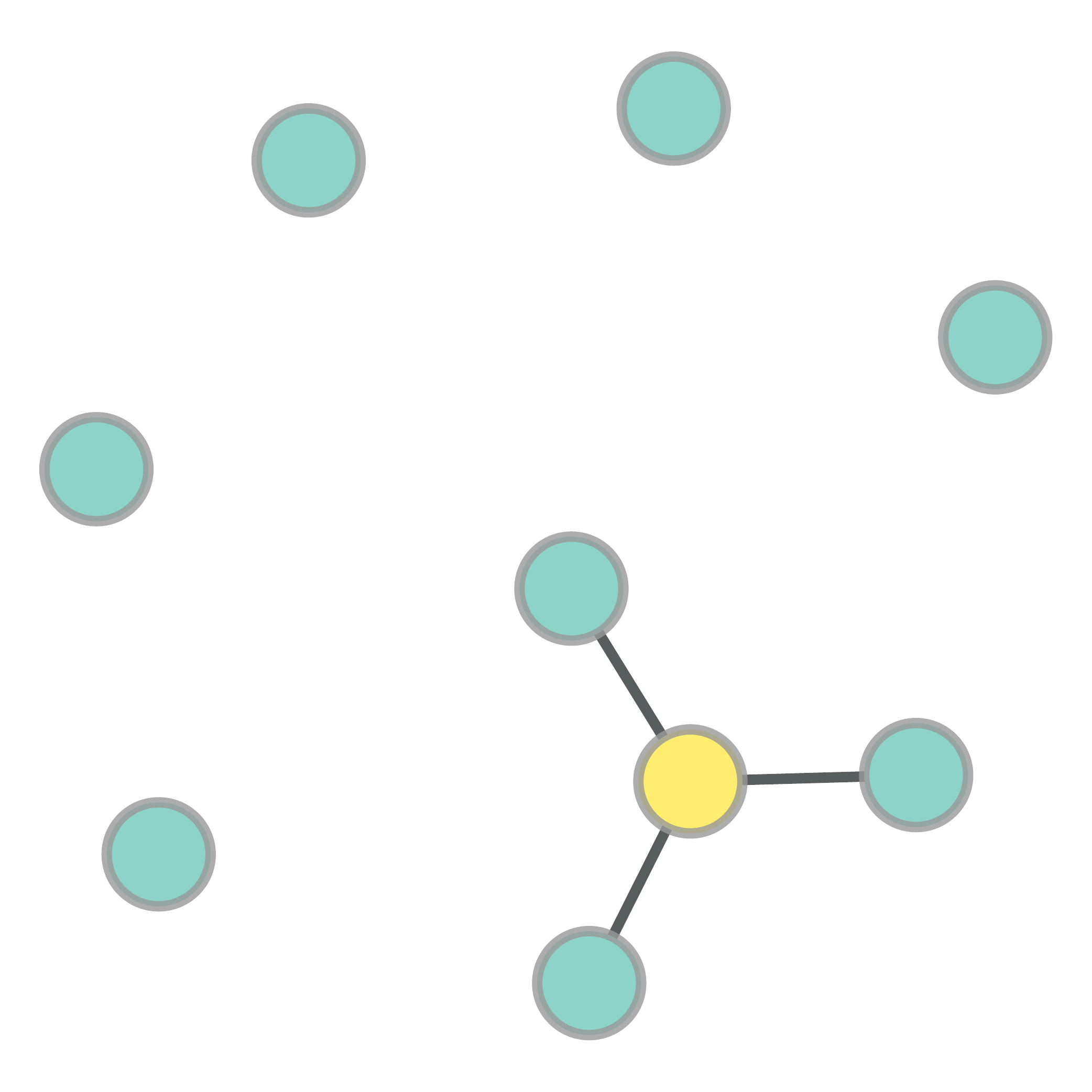}} \\[-5.8ex]
  &
\subfloat[]{\includegraphics[width = 0.7in]{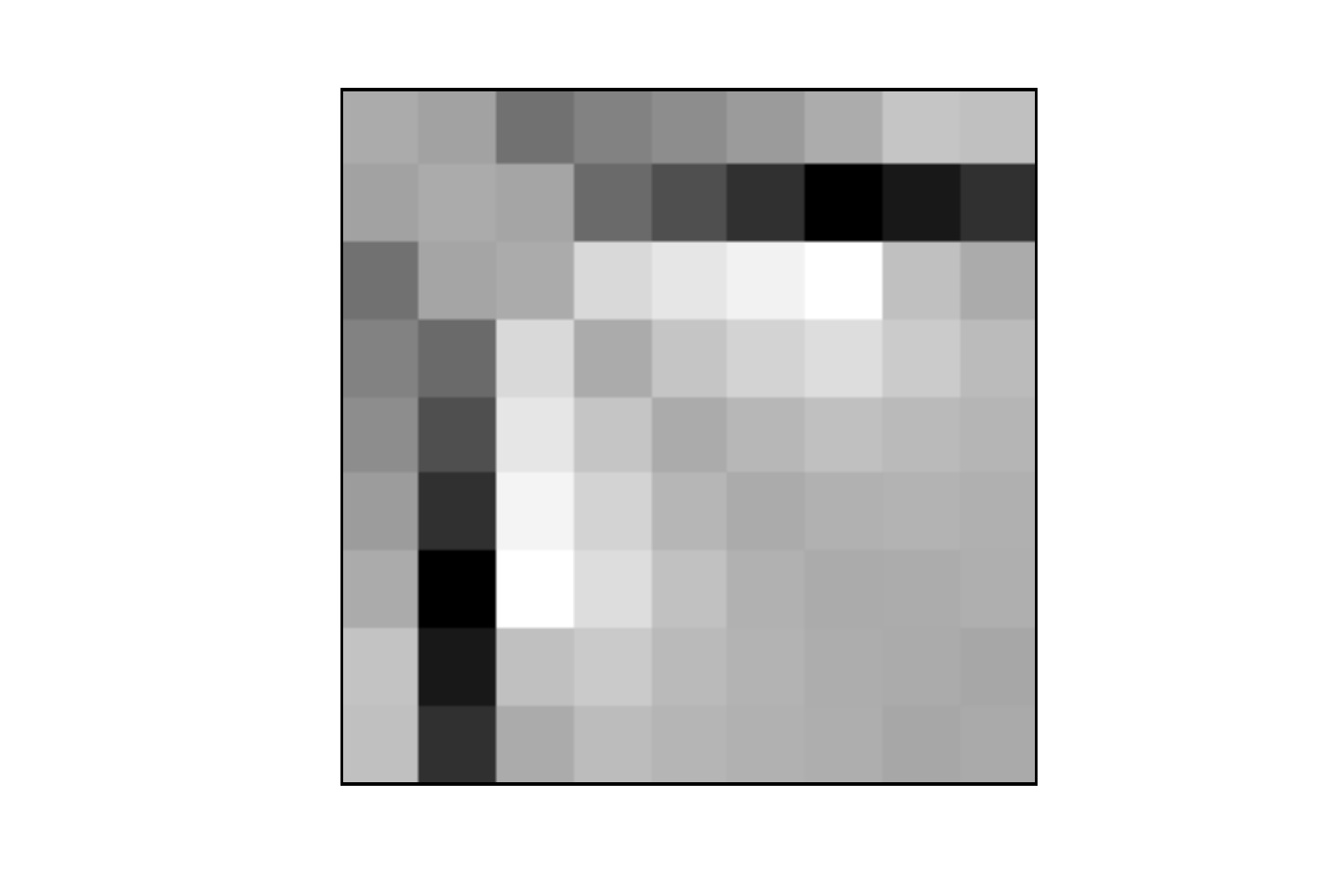}} &
\subfloat[]{\includegraphics[width = 0.4in]{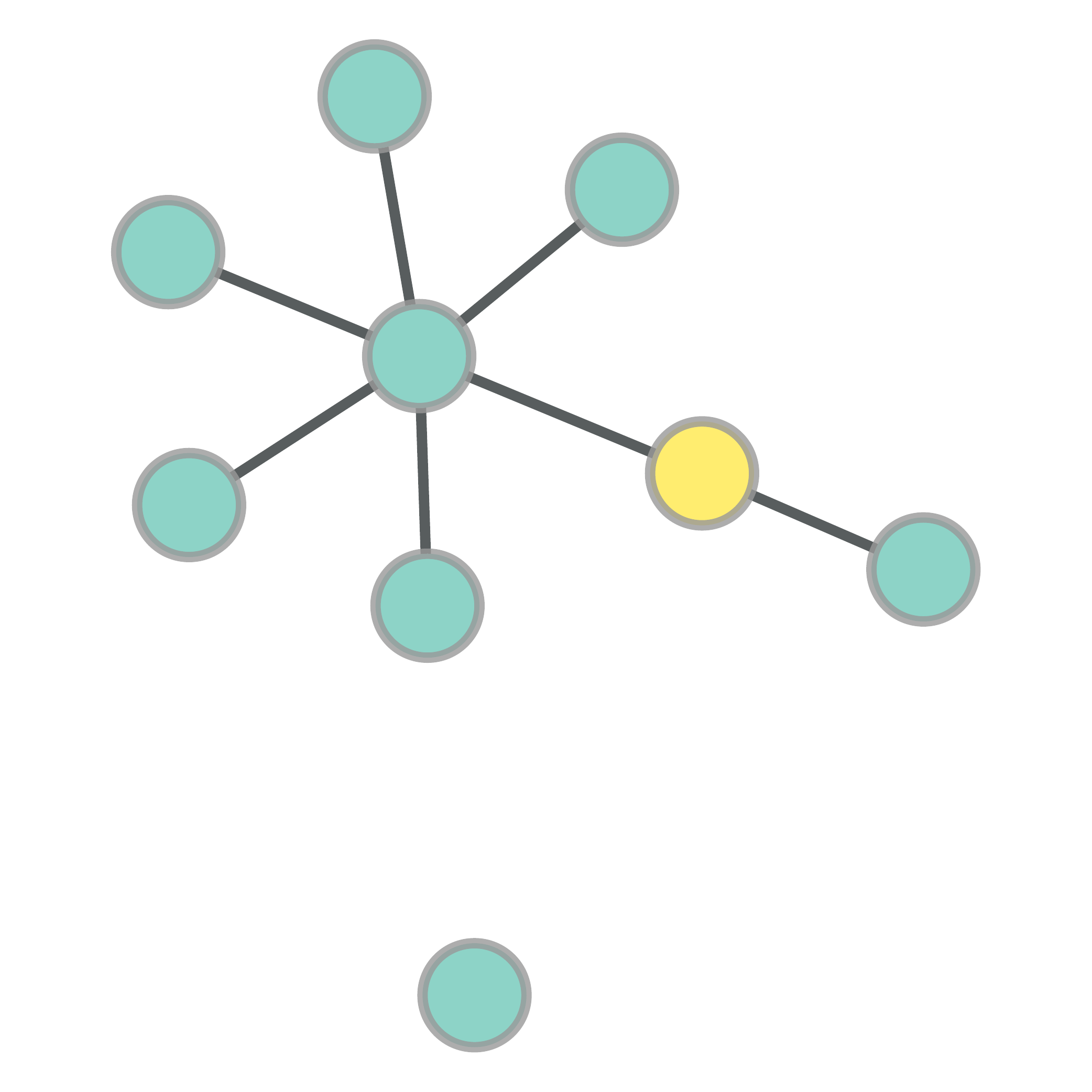}} &
\subfloat[]{\includegraphics[width = 0.4in]{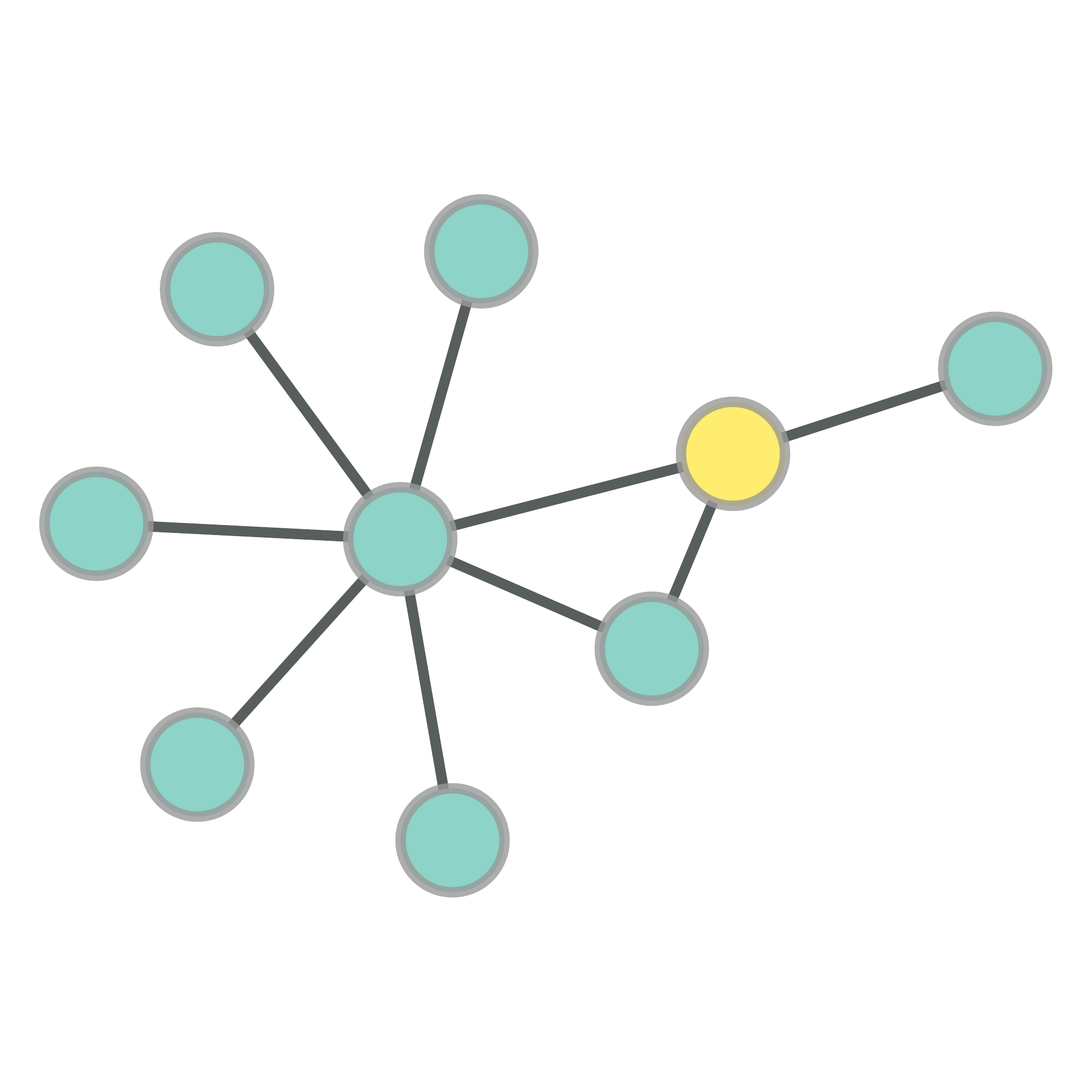}} &
\subfloat[]{\includegraphics[width = 0.4in]{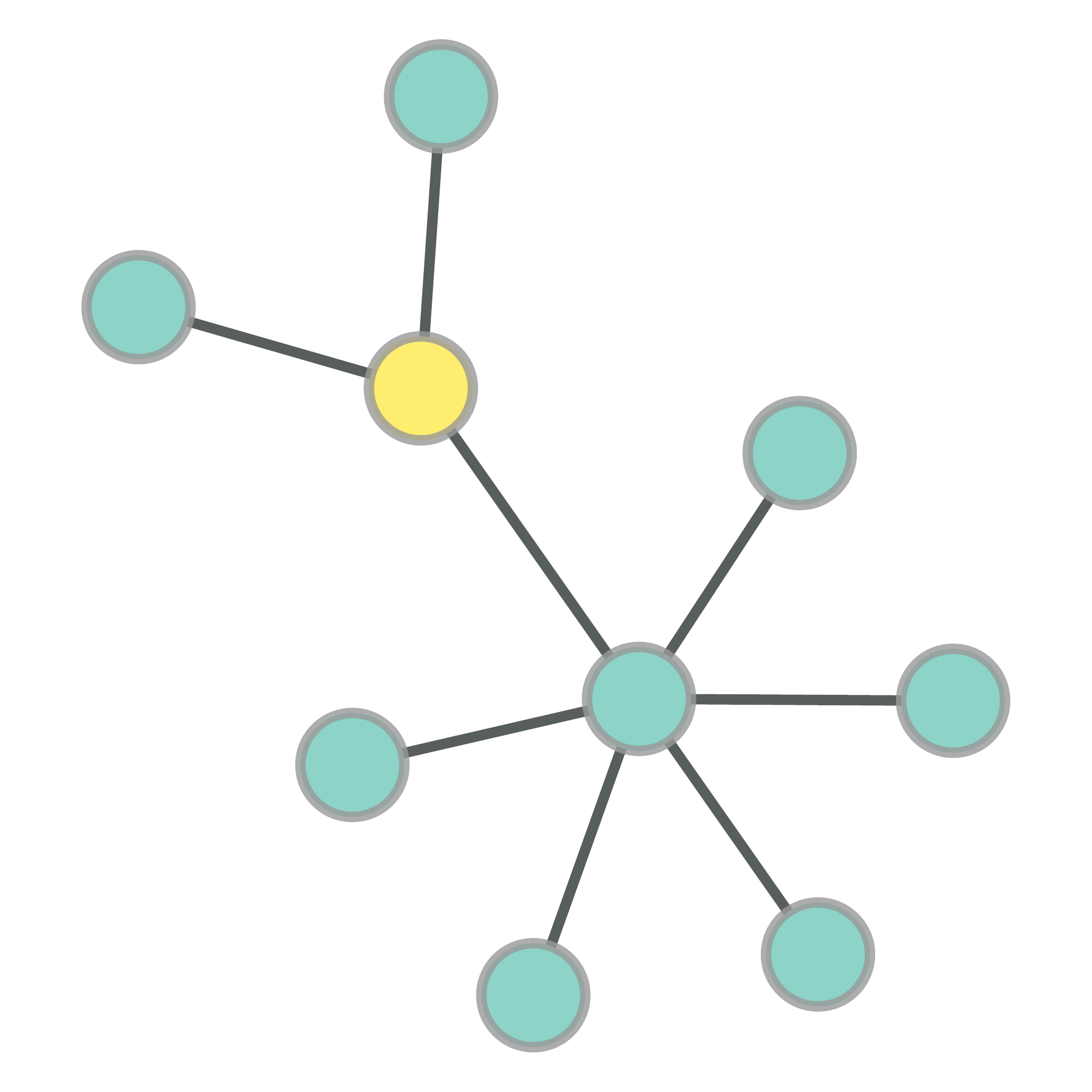}} &
  &
  & 
\subfloat[]{\includegraphics[width = 0.7in]{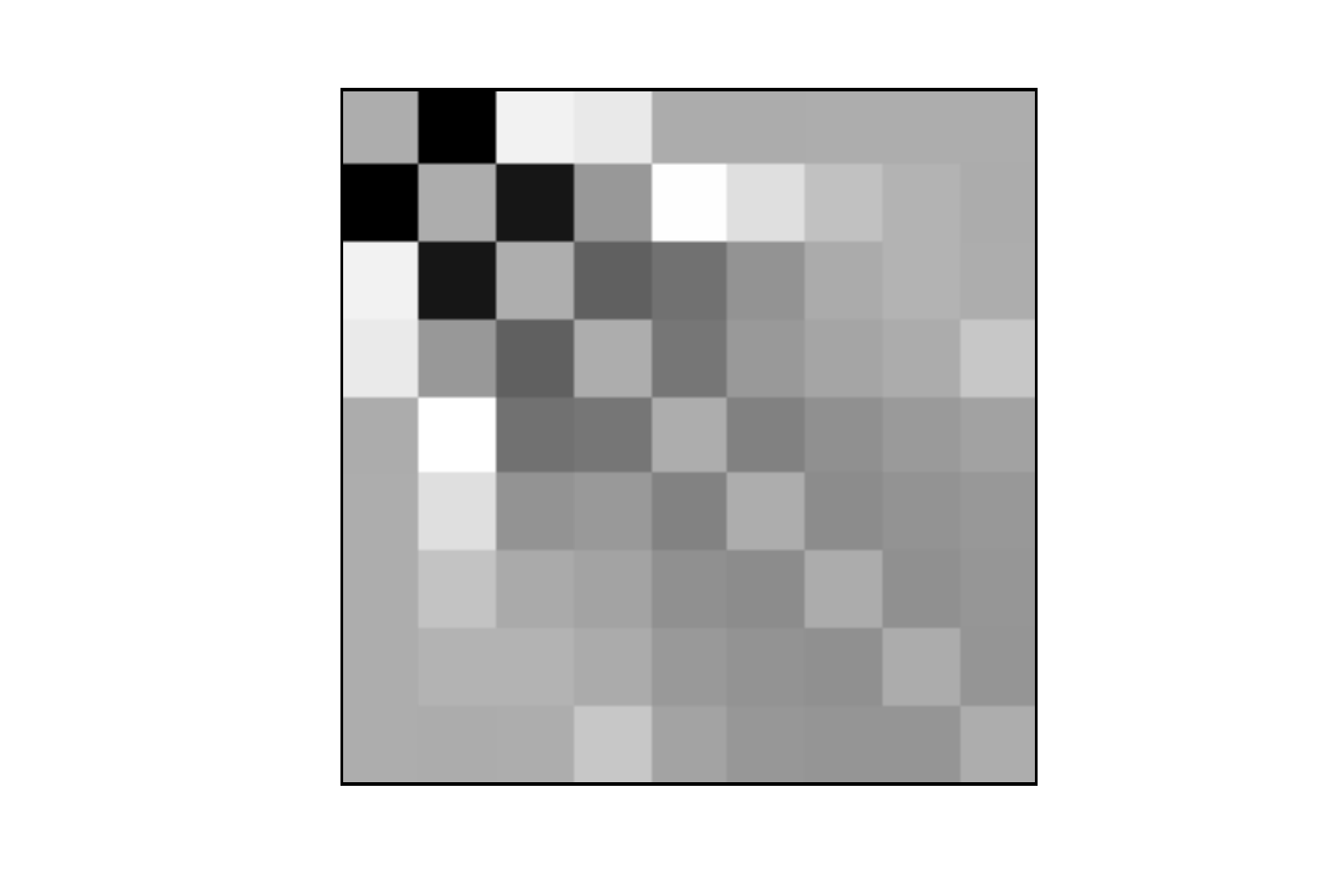}} &
\subfloat[]{\includegraphics[width = 0.4in]{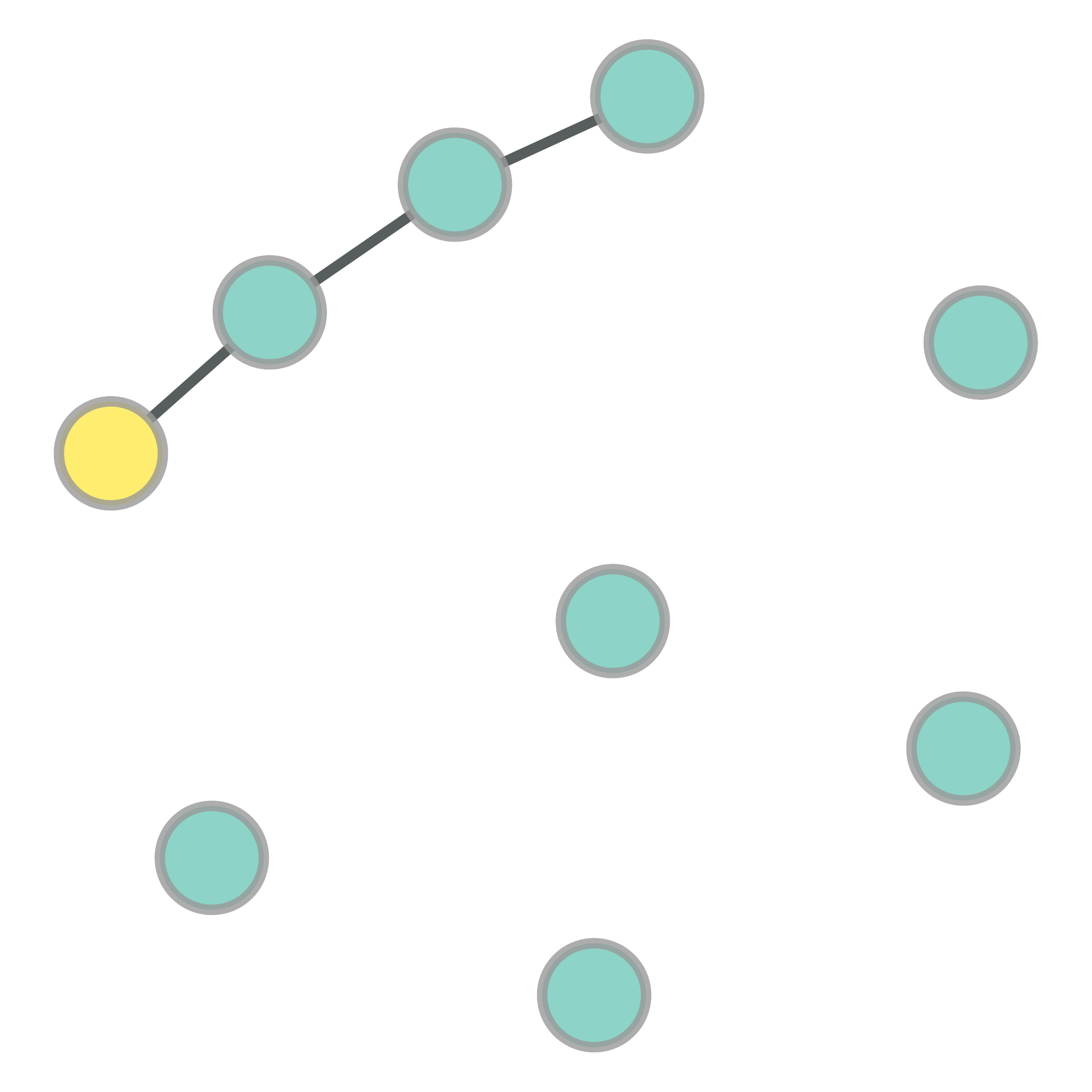}} &
\subfloat[]{\includegraphics[width = 0.4in]{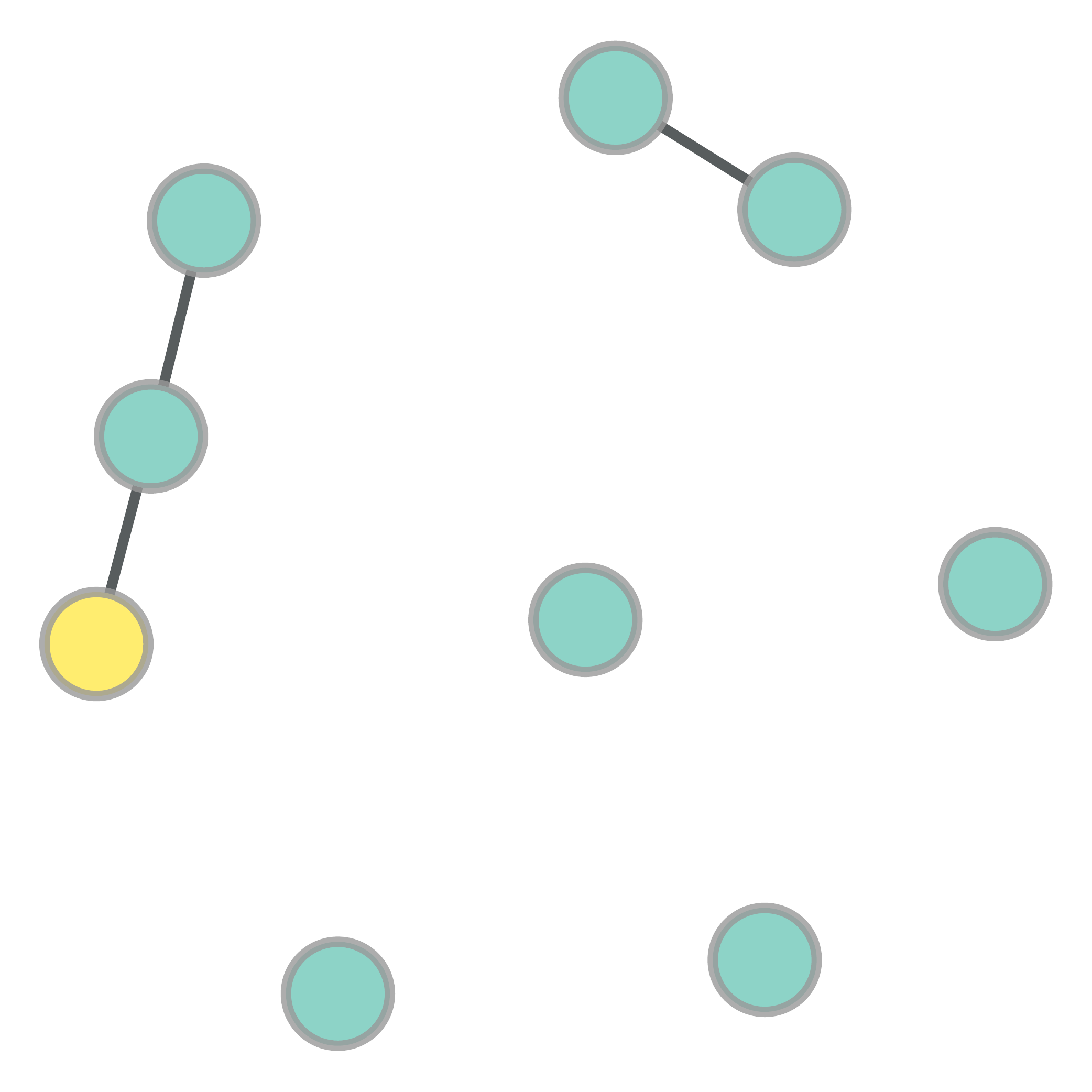}} &
\subfloat[]{\includegraphics[width = 0.4in]{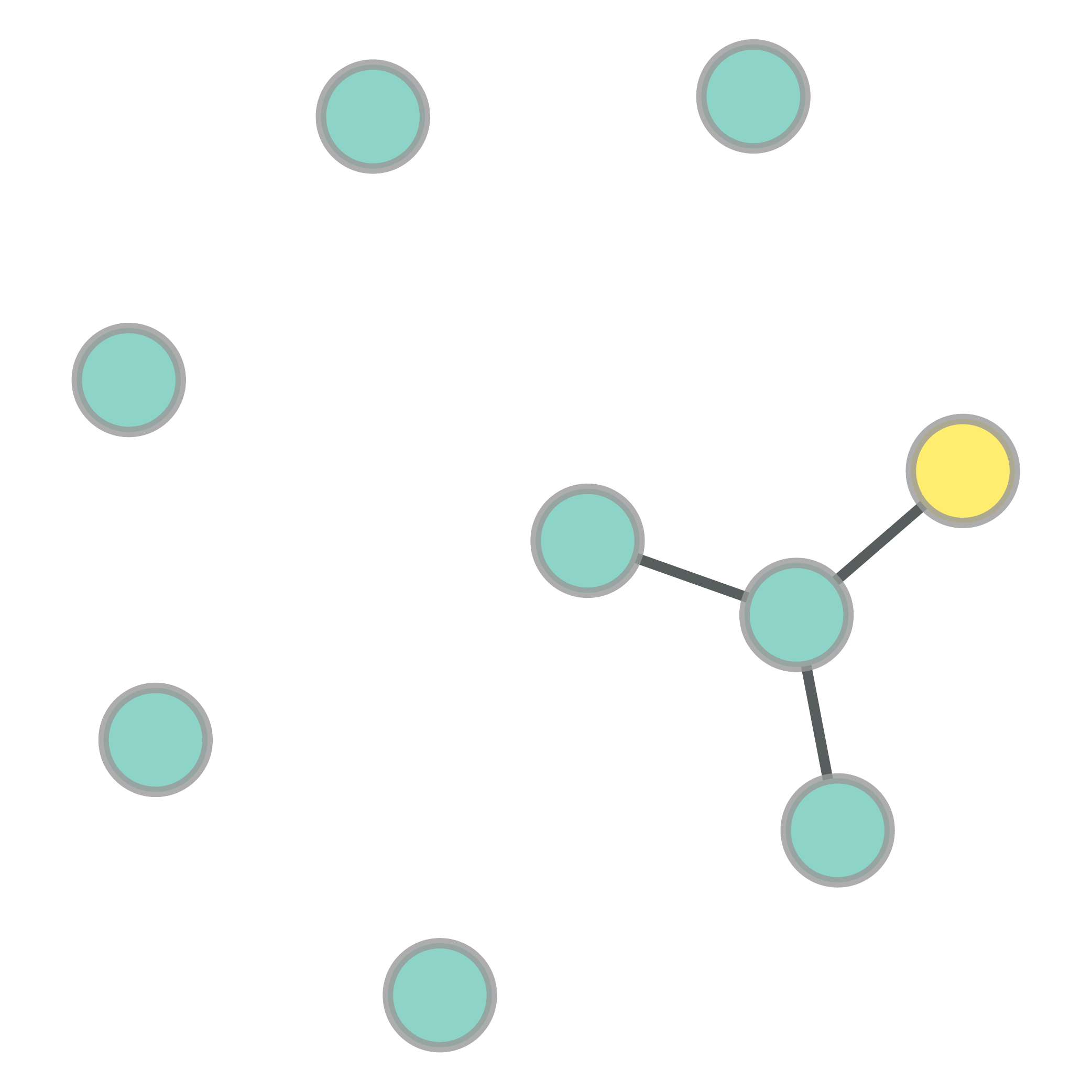}} \\
\end{tabular}
\vspace{-2mm}
\caption{\label{fig-features}Visualization of RBM features learned with 1-dimensional WL normalized receptive fields of size $9$ for a torus (periodic lattice, top left), a preferential attachment graph (\citealt{Barabasi:1999}, bottom left), a co-purchasing network of political books (top right), and a random graph (bottom right). Instances of these graphs with about $100$ nodes are depicted on the left. A visual representation of the feature's weights (the darker a pixel, the stronger the corresponding weight) and $3$ graphs sampled from the RBMs by setting all but the hidden node corresponding to the feature to zero. Yellow nodes have position $1$ in the adjacency matrices. (Best seen in color.)}
\end{figure*}

\subsection{Runtime Analysis}
We assess the efficiency of \textsc{Patchy-san} by applying it to real-world graphs. The objective is to compare the rates at which receptive fields are generated to the rate at which state of the art CNNs perform learning. All input graphs are part of the collection of the Python module \textsc{graph-tool}\footnote{https://graph-tool.skewed.de/}. For a given graph, we used \textsc{Patchy-san} to compute a receptive field for \emph{all}  nodes using the $1$-dimensional Weisfeiler-Lehman~\cite{douglas2011weisfeiler} (1-WL) algorithm for the normalization.  \textbf{torus} is a periodic lattice with $10,000$ nodes; \textbf{random} is a random undirected graph with $10,000$ nodes and a degree distribution $P(k)  \propto 1/k$ and $k_{\max} = 3$; \textbf{power} is a network representing the topology of a power grid in the US; \textbf{polbooks} is a co-purchasing network of books about US politics published during the $2004$ presidential election; \textbf{preferential} is a preferential attachment network model where newly added vertices have degree $3$; \textbf{astro-ph} is a coauthorship network between authors of preprints posted on the astrophysics arxiv~\cite{newman:2001}; \textbf{email-enron} is a communication network generated from about half a million sent emails~\cite{leskovec:2009}. All experiments were run on commodity hardware with 64G RAM and a single 2.8 GHz CPU. 

Figure~\ref{fig-runtime} depicts the receptive fields per second rates for each input graph. For receptive field size $k=5$ and $k=10$ \textsc{Patchy-san} creates fields at a rate of more than $1000/s$ except for \textbf{email-enron} with a rate of $600/s$ and $320/s$, respectively. For $k=50$, the largest tested size, fields are created at a rate of at least $100/s$. A CNN with $2$ convolutional and $2$ dense layers learns at a rate of about $200$-$400$ training examples per second on the same machine. Hence, the speed at which receptive fields are generated is sufficient to saturate a downstream CNN.

\subsection{Feature Visualization}

The visualization experiments' aim is to qualitatively investigate whether popular models such as the restricted Boltzman machine (RBM)~\cite{freund:1992} can be combined with \textsc{Patchy-san} for unsupervised feature learning.  For every input graph, we have generated receptive fields for all nodes and used these as input  to an RBM.  The RBM had $100$ hidden nodes and was trained for $30$ epochs with contrastive divergence and a learning rate of $0.01$. We visualize the features learned by a single-layer RBM for $1$-dimensional Weisfeiler-Lehman (1-WL) normalized receptive fields of size $9$. Note that the features learned by the RBM correspond to reoccurring receptive field patterns. Figure~\ref{fig-features} depicts some of the features and samples drawn from it for four different graphs.

%

\begin{table*}[t!]
\footnotesize
\centering
\begin{tabular}{l|c|c|c|c|c}
{\bf Data set} & MUTAG & PCT & NCI1 & PROTEIN & D \& D \\ 
\hline
\hline 
Max  & 28 & 109 & 111 & 620 & 5748 \\ 
\hline 
Avg  & 17.93 & 25.56 & 29.87 &   39.06 & 284.32 \\ 
\hline 
Graphs & 188 & 344 & 4110 &   1113 & 1178 \\ 
\hline 
\hline 
SP \ \ [\citenum{Borgwardt:2005}] & $85.79 \pm 2.51$ & $58.53 \pm 2.55$ & $73.00 \pm 0.51$ &  $75.07 \pm 0.54$  & $> 3$ days \\ 
\hline 
RW [\citenum{Gaertner:2003}] & $83.68 \pm 1.66$  & $57.26 \pm 1.30$ &   $> 3$ days   & $74.22 \pm 0.42$ & $> 3$ days \\ 
\hline 
GK \ [\citenum{Shervashidze:2009}]  & $81.58 \pm 2.11$ & $57.32 \pm 1.13$ & $62.28 \pm 0.29$ &  $71.67 \pm 0.55$ & $78.45 \pm 0.26$ \\ 
\hline 
WL [\citenum{Shervashidze:2011}] & $80.72 \pm 3.00\ (5s)$ & $56.97 \pm 2.01\ (30s)$ & $80.22 \pm 0.51\ (375s)$ & $72.92 \pm 0.56\ (143s)$ & $77.95 \pm 0.70\ (609s)$ \\ 
\hline
\hline
PSCN $k$=$5$ & $91.58 \pm 5.86\ (2s)$ & $59.43 \pm 3.14\ \ (4s)$  & $72.80 \pm 2.06\ \ (59s)$ &  $74.10 \pm 1.72\ \ (22s)$    &  $74.58 \pm 2.85\ (121s)$ \\ 
\hline
PSCN $k$=$10$ & $88.95 \pm 4.37 \ (3s)$ & $62.29 \pm 5.68\ \ (6s)$  & $76.34 \pm 1.68\ \ (76s)$ &   $75.00 \pm 2.51\ \ (30s)$    & $76.27 \pm 2.64\ (154s)$ \\ 
\hline
PSCN $k$=$10^{\mathtt{E}}$\hspace{-1mm} & $92.63 \pm 4.21 \ (3s)$ & $60.00 \pm 4.82\ \ (6s)$  & $78.59 \pm 1.89\ \ (76s)$ &   $75.89 \pm 2.76\ \ (30s)$    & $77.12 \pm 2.41\ (154s)$ \\ 
\hline 
\hline
PSLR $k$=$10$ & $87.37 \pm 7.88$ & $58.57 \pm 5.46$  & $70.00 \pm 1.98$ &   $71.79 \pm 3.71$ & $68.39 \pm 5.56$ \\ 
\hline
\end{tabular}
\caption{\label{fig-classification} Properties of the data sets and  accuracy and timing results (in seconds) for \textsc{patchy-san} and $4$ state of the art graph kernels. 
}
\end{table*}

\subsection{Graph Classification}
Graph classification is the problem of assigning graphs to one of several categories. 

{\bf Data Sets.}
We use $6$ standard benchmark data sets to compare run-time and classification accuracy with state of the art graph kernels: MUTAG, PCT, NCI1, NCI109, PROTEIN, and D\&D. MUTAG~\cite{debnath:1991} is a data set of $188$ nitro compounds where classes indicate whether the compound has a mutagenic effect on a bacterium. PTC consists of $344$ chemical compounds where classes indicate carcinogenicity for male and female rats~\cite{toivonen:2003}. NCI1 and NCI109 are chemical compounds screened for activity against non-small cell lung cancer and ovarian cancer cell lines~\cite{wale:2006}. PROTEINS is a graph collection where nodes are secondary structure elements and edges  indicate neighborhood in the amino-acid sequence or in 3D space. Graphs are classified as enzyme or non-enzyme. D\&D is a data set of $1178$ protein structures~\cite{Dobson:2003} classified into enzymes and non-enzymes. 

{\bf Experimental Set-up.}
We compared \textsc{Patchy-san} with the shortest-path kernel (SP)~\cite{Borgwardt:2005}, the random walk kernel (RW)~\cite{Gaertner:2003}, the graphlet count kernel (GK)~\cite{Shervashidze:2009}, and the Weisfeiler-Lehman subtree kernel (WL)~\cite{Shervashidze:2011}. Similar to previous work~\cite{Yanardag:2015}, we set the height parameter of WL to $2$, the size of the graphlets for GK to $7$, and chose the decay factor for RW from $\{10^{-6}, 10^{-5}, ..., 10^{-1}\}$.
We performed $10$-fold cross-validation with \textsc{LIB-SVM}~\cite{Chang:2011}, using $9$ folds for training and $1$ for testing, and repeated the experiments $10$ times. We report average prediction accuracies and standard deviations.

For \textsc{Patchy-san} (referred to as PSCN), we used $1$-dimensional WL normalization, a width $w$ equal to the average number of nodes (see Table~\ref{fig-classification}), and receptive field sizes of $k=5$ and $k=10$. For the experiments we only used node attributes. In addition, we ran experiments for $k=10$ where we combined receptive fields for nodes and edges using a merge layer ($k=10^{\mathtt{E}}$). To make a fair comparison, we used a single network architecture with two convolutional layers, one dense hidden layer, and a softmax layer for all experiments. The first convolutional layer had $16$ output channels (feature maps). The second conv layer has $8$ output channels, a stride of $s=1$, and a  field size of $10$. The convolutional layers have rectified linear units. The dense layer has $128$ rectified linear units with a dropout rate of $0.5$. Dropout and the relatively small number of neurons are needed to avoid overfitting on the smaller data sets. The only hyperparameter we optimized is the number of epochs and the batch size for the mini-batch gradient decent algorithm \textsc{rmsprop}. All of the above was implemented with the \textsc{Theano}~\cite{bergstra:2010} wrapper \textsc{Keras}~\cite{chollet:2015}. We also applied a logistic regression (PSLR) classifier on the patches for $k=10$. 

Moreover, we ran experiments with the same set-up\footnote{Due to the larger size of the data sets, we removed dropout.} on larger social graph data sets (up to $12000$ graphs each, with an average of $400$ nodes), and compared \textsc{Patchy-san} with previously reported results for the graphlet count (GK) and the deep graphlet  count kernel (DGK)~\cite{Yanardag:2015}.  We used the normalized node degree as attribute for \textsc{Patchy-san}, highlighting one of its advantages: it can easily incorporate continuous features. 

\begin{table}[t!]
\footnotesize
\centering
\begin{tabular}{l|c|c|c}
{\bf Data set} & GK [\citenum{Shervashidze:2009}]  & DGK [\citenum{Yanardag:2015}]  & PSCN $k$=$10$ \\
\hline
\hline
COLLAB & $72.84 \pm 0.28$ & $73.09 \pm 0.25$ & $72.60 \pm 2.15$ \\
\hline
IMDB-B & $65.87 \pm 0.98$ & $66.96 \pm 0.56$ & $71.00 \pm 2.29$ \\
\hline
IMDB-M & $43.89 \pm 0.38$ & $44.55 \pm 0.52$ & $45.23 \pm 2.84$ \\
\hline
RE-B & $77.34 \pm 0.18$ & $78.04 \pm 0.39$ & $86.30 \pm 1.58$ \\
\hline
RE-M5k &  $41.01 \pm 0.17$ & $41.27 \pm 0.18$ & $49.10 \pm 0.70$ \\
\hline
RE-M10k & $31.82 \pm 0.08$  & $32.22 \pm 0.10$ & $41.32 \pm 0.42$\\
\hline
\end{tabular}
\caption{\label{table-social-graphs} Comparison of accuracy results on social graphs [\citenum{Yanardag:2015}].}
\end{table}

{\bf Results.}
Table~\ref{fig-classification} lists the results of the experiments. We omit the results for NCI109 as they are almost identical to NCI1. Despite using a one-fits-all CNN architecture, the CNNs accuracy is highly competitive with existing graph kernels. In most cases, a receptive field size of $10$ results in the best classification accuracy. The relatively high variance can be explained with the small size of the benchmark data sets and the fact that the CNNs hyperparameters (with the exception of epochs and batch size) were not tuned to individual data sets. Similar to the experience on image and text data, we expect \textsc{Patchy-san} to perform even better for large data sets. Moreover, \textsc{Patchy-san} is between $2$ and $8$ times more efficient than the most efficient graph kernel (WL). We expect the performance advantage to be much more pronounced for data sets with a large number of graphs. Results for betweeness centrality normalization are similar with the exception of the runtime which increases by about $10$\%. Logistic regression applied to \textsc{Patchy-san}'s receptive fields performs worse, indicating that \textsc{Patchy-san} works especially well in conjunction with CNNs which learn non-linear feature combinations and which share weights across receptive fields. 

\textsc{Patchy-san} is also highly competitive on the social graph data. It significantly outperforms the other two kernels  on four of the six data sets and achieves ties on the rest. Table~\ref{table-social-graphs} lists the results of the experiments.

\section{Conclusion and Future Work}

We proposed a framework for learning graph representations that are especially beneficial in conjunction with CNNs. It combines two complementary procedures: (a) selecting a sequence of nodes that covers large parts of the graph and (b) generating local normalized neighborhood representations for each of the nodes in the sequence.
Experiments show that the approach is competitive with state of the art graph kernels. 

Directions for future work include the use of alternative neural network architectures such as RNNs; combining different receptive field sizes; pretraining with RBMs and autoencoders; and statistical relational models based on the ideas of the approach. 
\vspace{-1mm}
\section*{Acknowledgments}
Many thanks to the anonymous ICML reviewers who provided tremendously helpful comments. The research leading to these results has received
funding from the European Union's Horizon 2020 innovation action program under grant agreement No 653449-TYPES.


\bibliography{dlnf}
\bibliographystyle{icml2016}

\end{document}